\DeclarePairedDelimiter\set\{\}
\newcommand{\appropto}{\mathrel{\vcenter{
  \offinterlineskip\halign{\hfil$##$\cr
    \propto\cr\noalign{\kern2pt}\sim\cr\noalign{\kern-2pt}}}}}
\def\!#1{\mathcal{#1}}
\def\*#1{\boldsymbol{\mathbf{#1}}}
\def\|#1{\textnormal{#1}}
\def\##1{\mathfrak{#1}}
\def\norm#1{\big\lVert#1\big\rVert}
\DeclareMathOperator*{\argmin}{arg\,min}
\tikzset{
  dot hidden/.style={},
  line hidden/.style={},
  dot colour/.style={dot hidden/.append style={color=#1}},
  dot colour/.default=black,
  line colour/.style={line hidden/.append style={color=#1}},
  line colour/.default=black
}
\NewDocumentCommand{\drawdie}{O{}m}{%
\begin{tikzpicture}[x=1em,y=1em,radius=0.1,#1]
  \draw[rounded corners=0.5,line hidden] (0,0) rectangle (1,1);
  \ifodd#2
    \fill[dot hidden] (0.5,0.5) circle;
  \fi
  \ifnum#2>1
    \fill[dot hidden] (0.2,0.2) circle;
    \fill[dot hidden] (0.8,0.8) circle;
   \ifnum#2>3
     \fill[dot hidden] (0.2,0.8) circle;
     \fill[dot hidden] (0.8,0.2) circle;
    \ifnum#2>5
      \fill[dot hidden] (0.8,0.5) circle;
      \fill[dot hidden] (0.2,0.5) circle;
     \ifnum#2>7
       \fill[dot hidden] (0.5,0.8) circle;
       \fill[dot hidden] (0.5,0.2) circle;
      \fi
    \fi
  \fi
\fi
\end{tikzpicture}%
}  
\setlist[enumerate]{leftmargin=.5in}
\setlist[itemize]{leftmargin=.5in}
\Crefname{ALC@unique}{Line}{Lines}
\newcounter{myalg}
\crefname{hypothesis}{Hypothesis}{Hypotheses}
\crefname{enumi}{Property}{Properties}
\Crefname{enumi}{Property}{Properties}
\title{A Meta-learning Formulation of the Autoencoder Problem \\ for Non-linear Dimensionality Reduction\thanks{Submitted to the arXiv \today.
\funding{This work was supported by DOE through award ASCR DE-SC0021313, by NSF  through award  CDS\&E--MSS 1953113, and by the Computational Science Laboratory at Virginia Tech.}}}
\author{Andrey A. Popov\thanks{Computational Science Laboratory, Department of Computer Science, Virginia Tech, 
  (\email{apopov@vt.edu}, \email{sarshar@vt.edu}, \email{achennault@vt.edu}, \email{sandu@cs.vt.edu}, \url{https://csl.cs.vt.edu}).}
\and Arash Sarshar\footnotemark[2]
\and Austin Chennault\footnotemark[2]
\and Adrian Sandu\footnotemark[2]}
\begin{document}

\csltitlepage

\maketitle

\begin{abstract}
A rapidly growing area of research is the use of machine learning approaches such as autoencoders for dimensionality reduction of data and models in scientific applications. We show that the canonical formulation of autoencoders suffers from several deficiencies that can hinder their performance.
Using a meta-learning approach, we reformulate the autoencoder problem as a bi-level optimization procedure that explicitly solves the dimensionality reduction task. 
We prove that the new formulation corrects the identified deficiencies with canonical autoencoders, provide a practical way to solve it, and showcase the strength of this formulation with a simple numerical illustration.
\end{abstract}

\begin{keywords}
  autoencoder, dimensionality reduction, machine learning, meta-learning, small-data
\end{keywords}

\begin{AMS}
  68T07, 62P99, 62G05
\end{AMS}

\section{Introduction}
\label{sec:introduction}

Autoencoders are often employed to  perform dimensionality reduction of data~\cite{aggarwal2018neural,Goodfellow-et-al-2016} for many scientific tasks in fields such as bioinformatics~\cite{ding2005minimum, toronen1999analysis}, finance~\cite{ni2006recurrent}, dynamical systems~\cite{brunton2019data,chinesta2016model, guckenheimer2013nonlinear,strogatz2018nonlinear}, and fluids~\cite{gonzalez2018deep}, and have also shown promise for performing non-linear model order reduction~\cite{cooper2021investigation,kim2020fast, lee2020model,maulik2021reduced}.
The recent trend of theory-guided machine-learning~\cite{karpatne2017theory} has spurred additional interest in machine learning methods that can be applied in a robust manner to the aforementioned scientific tasks. 

Classical methods for dimensionality reduction such as principal component analysis~\cite{james2013introduction} (or proper orthogonal decomposition~\cite{popov2021multifidelity,sirovich1987turbulence1,wang2012proper}), dynamic mode decomposition~\cite{kutz2016dynamic}, and locally linear embedding~\cite{roweis2000nonlinear}, all achieve some sense of interpretability through the preservation of a linear structure.
Linear methods, while nice and interpretable, often cannot reduce a general problem to is intrinsic dimension~\cite{lee2007nonlinear}, making their use challenging for certain applications.  
Autoencoders, on the other hand, are fully non-linear, and thus have  potential to overcome this fundamental limit of (quasi-)linear methods.

The advent of automatic differentiation (AD) in machine learning~\cite{baydin2018automatic} has given rise to significantly faster development of novel neural-network architectures by allowing much faster prototyping. 
Neural-networks are not the only type of algorithm that has benefited from recent AD research. Differentiable programming~\cite{innes2019differentiable}, has allowed AD-empowered methods such as back-propagation to be applied to a large class of classical differentiable algorithms.
This flexibility has allowed more freedom in considering solutions to machine-learning problems, and is a motivation of this work.

Meta-learning, commonly regarded as the process of \textit{learning-to-learn}, is a new paradigm for training of neural networks and other machine learning models that relies on a two-level approach~\cite{dempe2002foundations,franceschi2018bilevel}: the lower-level focuses on finding solutions to different tasks, while the \textit{meta}-level configures the learning algorithm for improved generalization~\cite{Vilalta2002Jun,Hospedales2021May}. 
Meta-learning has found applications in many disciplines such as online learning of new data~\cite{pmlr-v97-finn19a,pmlr-v70-finn17a}, adaptive data selection (curriculum learning)~\cite{Bengio2009Jun,pmlr-v80-ren18a}, hyper-parameter optimization~\cite{franceschi2018bilevel,pmlr-v108-lorraine20a, franceschi2018bilevel}, and neural network architecture discovery~\cite{Stanley2019Jan,Chen_2019_ICCV}.
There is also existing literature on meta-learning of transformations of input data to create new features for classification and regression tasks~\cite{lee2007learning}, and to aid autoencoders~\cite{pmlr-v22-zhou12b}.

This work identifies several properties that classical methods for dimensionality reduction possess, but regular autoencoders lack, and provide a new formulation of autoencoders which includes these properties. We extend some of the ideas presented in~\cite{popov2021nlmfenkf}, where the new ``right-inverse'' property for autoencoders was identified, and was shown to significantly aid the construction of reduced dimension models for dynamical systems.

The new contributions of this work are as follows. We formalize deficiencies of the standard autoencoder formulation, namely that it does not satisfy the right-inverse property and it does not guarantee that the reconstruction is encodable; we believe these are strong arguments for the need to reformulate the autoencoder problem for dimensionality reduction. We cast autoencoding as a meta-learning problem~\cite{franceschi2018bilevel,Hospedales2021May}, where the lower-level task identifies the reduced dimension representation of each data point in our space of interest, implicitly defining the encoder, and
the meta-task computes the optimal decoder that reconstructs a full dimensional representation from the reduced dimensional one. The new formulation is named meta-autoencoder.
We provide a practical implementation of the meta-autoencoder that embeds Gauss-Newton iterations into the encoder.  This practical implementation  allows to fully solve the semi-stochastic bi-level optimization problem through the conventional ADAM~\cite{kingma2014adam} algorithm without any modification to the training loop. We illustrate this practical implementation and the benefits that it provides on the MNIST dataset~\cite{deng2012mnist}.

This paper is organized as follows. We first provide background about canonical autoencoders and discuss their shortcomings when applied to the dimensionality reduction problem in \cref{sec:motivation}. We then introduce the meta-learning formulation of the autoencoder problem in \cref{sec:reduced-dimension-autoencoders}, and approaches for its practical implementation. A numerical illustration that demonstrates the power of our approach is given in \cref{sec:numerical-illustration}. Finally, concluding remarks are drawn in \cref{sec:conclusions}.

\section{Background and Motivation}
\label{sec:motivation}

Consider a random variable $X$ that represents the distribution of $n$-dimensional high fidelity data over the support $\widehat{\mathbb{X}} \subset \mathbb{X}$ with $\dim(\mathbb{X}) = n$, where the set of all high fidelity data is $\widehat{\mathbb{X}}$ and $\mathbb{X}$ is the full space in which the data resides. 
The goal of dimensionality reduction is to map $X$ to some $r$-dimensional reduced representation $U$, having support $\widehat{\mathbb{U}} \subset \mathbb{U}$ with $\dim(\mathbb{U}) = r$, where $\mathbb{U}$ is the underlying dimensionally reduced space of which the dimensionally reduced data set $\widehat{\mathbb{U}}$ is a subset. The reduced data  $U$ contains the important features of high fidelity data $X$, but is (usually) of significantly smaller dimension, $ r\ll n$. This dimension reduction is achieved by an {\it encoder}, i.e., a function,
\begin{equation}\label{eq:encoder}
    \theta(X) = U,\quad \theta : \mathbb{X}  \to \mathbb{U}, \quad \theta \in \mathcal{C}^\infty,
\end{equation}
which is a mapping from the full space $\mathbb{X}$ to the dimensionally reduced space $\mathbb{U}$. When \eqref{eq:encoder} is applied to the set $\widehat{\mathbb{X}}$ of all \textit{encodable} data, the image $\widehat{\mathbb{U}} = \theta(\widehat{\mathbb{X}})$ is the set of all encodings.
A {\it decoder} is another function,
\begin{equation}\label{eq:decoder}
    \phi(U) = \widetilde{X},\quad \phi : \mathbb{U} \to  \mathbb{X}, \quad \phi \in \mathcal{C}^\infty,
\end{equation}
which produces a reconstruction $\widetilde{X}\in \widetilde{\mathbb{X}} \subset \mathbb{X}$ of $X$, where $\widetilde{\mathbb{X}} = \phi(\widehat{\mathbb{U}})$ is the reconstruction set that lives in the same underlying space $\mathbb{X}$ as the original data. 
For the remainder of this paper we make the assumption that the encoder and decoder are smooth functions of their input, which is an important assumption for many scientific applications, and avoids a discussion of degenerate cases. 
The encoder~\cref{eq:encoder} and decoder~\cref{eq:decoder} combined define the \textit{autoencoder}.

Previous work assumes that a sufficient condition for defining the autoencoder is that the reconstruction $\widetilde{X}$ approximately matches the original data $X$ \cite{aggarwal2018neural,Goodfellow-et-al-2016}.
Under this assumption, the problem of finding the optimal encoder and decoder is typically posed as the following optimization problem:
\begin{equation}\label{eq:canonical-autoencoder}
    \phi_*,\,\theta_*  = \argmin_{\phi\,\in\*\Phi,\,\theta\,\in\*\Theta} \mathbb{E}_{X}\!L\big(\phi\big(\theta(X)\big), X\big),
\end{equation}
where the cost function $\!L : \mathbb{X} \times \mathbb{X} \to \mathbb{R}$ is typically taken to be the mean squared error. The sets $\*\Phi$ and $\*\Theta$ define all possible decoders and encoders, respectively, for the given problem. For example these can be sets of realizable neural network functions with some underlying architecture, though the ideas presented in this work are not restricted to this.
For the remainder of this paper, we refer to \cref{eq:canonical-autoencoder} as the {\it canonical autoencoder.}

We start with an example illustrating why~\cref{eq:canonical-autoencoder} is not sufficient for performing accurate dimensionality reduction.
Consider first reducing the dimension of a handwritten sample of the digit `$6$', and then reconstructing it. We schematically represent this process as:
\begin{equation}\label{eq:encode-decode-a-six}
    \theta\left(\fbox{6}\right) = \begin{bmatrix}\,\drawdie{6}\,\\\wr\end{bmatrix},\quad     \phi\left(\begin{bmatrix}\,\drawdie{6}\,\\\wr\end{bmatrix}\right) = \widetilde{\fbox{6}}.
\end{equation}
The reconstruction $\widetilde{\fbox{6}}$ is not exact, but is recognizable as the original data $\fbox{6}$, under certain criteria. The reduced dimensional representation in \cref{eq:encode-decode-a-six},
\begin{equation}
  X = \fbox{6}  \xrightarrow{\quad\theta\quad}  U = \begin{bmatrix}\,\drawdie{6}\,\\\wr\end{bmatrix},
\end{equation}
captures some features in the reduced space. The six dots on the die, $\drawdie{6}$, represent the features describing the number six, with the wavy line, $\wr$, representing the features describing how it was drawn.

\begin{figure}[t]
\centering
\scalebox{1.4}{%
    \begin{tikzpicture}[scale=1.25]

    \fill [fill opacity=0.2, red,rounded corners=5, draw]
    (0.0,0) --
    (0.0,1.1) --
    (2.05,1.1) --
    (2.05, 0) -- cycle {};
    \node[text=red!70!black] at (0.3,0.55) {$\widehat{\mathbb{X}}$};
    
    \node at (0.9, 0.55) {$\fbox{6}$};

    \fill [fill opacity=0.2, ForestGreen,rounded corners=5, draw]
    (2.2,0) --
    (2.2,1.1) --
    (4.75,1.1) --
    (4.75,0) -- cycle {};
    \node[text=ForestGreen!70!black] at (3.45, 0.55) {$\mathbb{X}\mkern-5mu \setminus \mkern-5mu\widehat{\mathbb{X}}$};
    
    \node at (4.3, 0.55) {$\fbox{\parbox[c][\heightof{\textit{6}}]{\widthof{\textit{6}}}{$\,\triangleright$}}$};
  
    \fill [fill opacity=0.2, RoyalBlue,rounded corners=5, draw]
    (1.4,0.1) --
    (1.4,1) --
    (2.9,1) --
    (2.9,0.1) -- cycle {};
    \node[text=RoyalBlue!70!black] at (1.725,0.59) {$\widetilde{\mathbb{X}}$};
  
    \node at (2.55, 0.61) {$\widetilde{\fbox{6}}$};

    \fill [color=gray,pattern=north west lines, fill opacity=0.2, rounded corners=5, draw] (0.6,-1.45) -- (0.6,-0.5) -- (2.25, -0.5) -- (2.25, -1.45) -- cycle {};
    \node[text=gray!70!black] at (0.9, -1) {$\mathbb{U}$};
    
    \node at(1.725, -1) {$\begin{bmatrix}\,\drawdie{6}\,\\\wr\end{bmatrix}$};
    
    \fill [color=gray,pattern=north east lines, fill opacity=0.2, rounded corners=5, draw] (2.4,-1.45) -- (2.4,-0.5) -- (4.25, -0.5) -- (4.25, -1.45) -- cycle {};
    \node[text=gray!70!black] at (2.85, -1) {$\mathbb{R}^r\mkern-5mu \setminus \mkern-5mu\mathbb{U}$};
    
    \node at (3.7, -1) {$\begin{bmatrix}\,\drawdie{3}\,\\\angle\end{bmatrix}$};
    
    \draw[color=black,-{Stealth[scale=2]}] (0.95,0.3) -- (1.675, -0.55);
    \node at (1.15, -0.2) {$\theta$};
   
   \draw[color=black,-{Stealth[scale=2]}] (1.775, -0.55) -- (2.5, 0.3);
   \node at (2.25, -0.2) {$\phi$};
   
    \draw[color=black,-{Stealth[scale=2]}] (2.6, 0.3) -- (3.65, -0.55);
    \node at (2.95, -0.2) {$\theta$};
    
    \draw[color=black,-{Stealth[scale=2]}] (3.75, -0.55) -- (4.25, 0.3);
    \node at (4.1, -0.2) {$\phi$};

\end{tikzpicture}%
    }
\caption{An illustration of the worst-case scenario for the canonical autoencoder that is described by \cref{eq:encode-decode-a-six} and \cref{eq:right-inverse-violation-example}.}
\label{fig:six-to-triangle-illustration}
\end{figure}
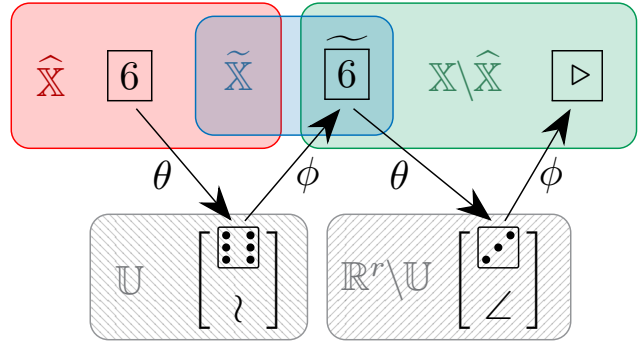

However, the representation of the reconstruction in the reduced space need not decode to a good approximation of the original data $\fbox{6}$. It may be the case that:
\begin{equation}\label{eq:right-inverse-violation-example}
    \theta\left(\widetilde{\fbox{6}}\right) = \begin{bmatrix}\,\drawdie{3}\,\\\angle\end{bmatrix}, \quad \phi\left(\begin{bmatrix}\,\drawdie{3}\,\\\angle\end{bmatrix}\right) = \fbox{\parbox[c][\heightof{\textit{6}}]{\widthof{\textit{6}}}{$\,\triangleright$}}.
\end{equation}
The reduced dimension representation of the reconstruction $\widetilde{\fbox{6}}$, could actually have features corresponding to the number three, $\drawdie{3}$, and a different, distinct, way of describing how to draw it, $\angle$. The corresponding reconstruction of said features could resemble a triangle $\fbox{\parbox[c][\heightof{\textit{6}}]{\widthof{\textit{6}}}{$\,\triangleright$}}$, which is clearly not recognizable as the digit `$6$'.
In other words, the reduced dimension representation of the reconstruction need not correspond to the reduced dimension representation of the original image.
A solution to~\cref{eq:canonical-autoencoder} may be unreliable, as for a given sample $X$, the reduced dimension representation $U$ is not necessarily the optimal reduced order representation of $X$, meaning that the reconstruction $\widetilde{X}$ need not represent $X$. An illustration of this example is presented in \cref{fig:six-to-triangle-illustration}.

It is easy to overlook why this is the case. One implicit assumption made when defining the canonical encoder and decoder \cref{eq:canonical-autoencoder} can be trivially violated. Take the reduced set,
\begin{equation}
    \widehat{\mathbb{U}} = \theta(\widehat{\mathbb{X}}),
\end{equation}
which is the set of all reduced order representations of the full order set under the action of the encoder.
Under the action of the decoder,
\begin{equation}
    \phi\big(\theta(\widehat{\mathbb{X}})\big) = \phi(\widehat{\mathbb{U}}) = \widetilde{\mathbb{X}},
\end{equation}
it is the reconstructed set $\widetilde{\mathbb{X}}$.
Notice that while the reconstruction set is a subset of full space $\mathbb{X}$, it is not at all necessary that it is a subset of the full data set at all, it could be that,
\begin{equation}
    \widetilde{\mathbb{X}} \not\subset \widehat{\mathbb{X}}.
\end{equation}
Therefore, there could potentially be reconstructed data that is not encodable.

As the pre-image of the reduced data set under the encoder is the full data set, 
\begin{equation}
    \theta^{-1}(\widehat{\mathbb{U}}) = \widehat{\mathbb{X}},
\end{equation}
it is possible that there exists an $X\in\widehat{\mathbb{X}}$, such that its reconstruction, $\widetilde{X}$ is not in the pre-image of the reduced data set under the encoder,
\begin{equation}
    \widetilde{X} = \phi\big(\theta(X)\big) \not\in \theta^{-1}(\widehat{\mathbb{U}}) = \widehat{\mathbb{X}},
\end{equation}
meaning that the action of the decoder would produce values that cannot themselves be encoded.
It therefore is the case that the reconstructed data set is not necessarily a subset of the pre-image of the reduced data set under the encoder,
\begin{equation}\label{eq:pre-image-violation}
    \widetilde{\mathbb{X}} = \phi(\widehat{\mathbb{U}}) \not\subset \theta^{-1}(\widehat{\mathbb{U}}) = \widehat{\mathbb{X}}.
\end{equation}
We call~\cref{eq:pre-image-violation} the \textit{pre-image violation}.

In the problem of reducing the dimension of a handwritten sample of the digit `6', it is desirable that the representation in the reduced space of the reconstruction is identical to the reduced space representation that the reconstruction was generated from,
\begin{equation}
    \theta\left(\widetilde{\fbox{6}}\right) = \begin{bmatrix}\,\drawdie{6}\,\\\wr\end{bmatrix},\quad     \phi\left(\begin{bmatrix}\,\drawdie{6}\,\\\wr\end{bmatrix}\right) = \widetilde{\fbox{6}},
\end{equation}
as the encoder would recognize the reconstruction as having the same reduced dimension representation as that which generated it in~\cref{eq:encode-decode-a-six}.
We call the satisfaction of the relationship,
\begin{equation}\label{eq:right-inverse-property}
    \theta\left(\phi\left(\theta\left(X\right)\right)\right) = \theta\left(X\right),\,\,\, \forall X \in \mathbb{X},
\end{equation}
the \textit{right-inverse property}, which sometimes is called the consistency property~\cite{abdulle2012heterogeneous}. It was previously shown in \cite{popov2021nlmfenkf} that weakly enforcing this property during training significantly improves the performance of autoencoder-based reduced order models.

The encoder-decoder pair generated by the canonical autoencoder~\cref{eq:canonical-autoencoder}, has the property that the cost function \cref{eq:canonical-autoencoder} is minimal over the totality of realizations of $X$ in the training data set $\widehat{\mathbb{X}}$. The formulation of the problem does not ensure that for any given realization of $X$ the corresponding realization of $U$ corresponds to a meaningful reconstruction. In other words, $U$ might not satisfy the right-inverse property~\cref{eq:right-inverse-property}.

We do not generally have access to the full data set $\widehat{\mathbb{X}}$, nor the distribution of $X$. In most cases $\widehat{\mathbb{X}}$ is uncountable, and we only have access to finite samples of the data, $\left[X_1, \dots X_N\right]$. In such a case issues such as overfitting~\cite{dietterich1995overfitting} arise. However, the problems that we discuss here are distinct from overfitting.
The pre-image violation in~\cref{eq:pre-image-violation} can occur even when the canonical autoencoder~\cref{eq:canonical-autoencoder} is trained \textit{over all possible} $\widehat{\mathbb{X}}$. This violation can cascade and cause a violation of the right inverse property~\cref{eq:right-inverse-property}.
In order to reconcile these facts, a new problem formulation for finding $\theta$ and $\phi$ is needed.

\begin{remark}
It is the authors' belief that conventional methods for `correcting' neural network-based autoencoders such as regularization~\cite{girosi1995regularization}, droupout~\cite{srivastava2014dropout}, and denoising~\cite{vincent2008extracting} among others, all contribute to alleviating the issues presented. The problem formulation presented in the following section is compatible with most, if not all, such methods. 
\end{remark}

\section{Formulation of Autoencoders as Meta-Learning Tasks}
\label{sec:reduced-dimension-autoencoders}

We construct a formulation of the autoencoder problem that addresses all three issue identified in \cref{sec:motivation}. Namely, one that explicitly accounts for the reduced dimension representation~\cref{eq:reduced-dimension-graph}, does not have the pre-image violation problem~\cref{eq:pre-image-violation}, and that satisfies the right-inverse property~\cref{eq:right-inverse-property}. 

The canonical autoencoder~\cref{eq:canonical-autoencoder} treats the dimensionality reduction task as a transformation from the full representation of $X$ to its reconstruction,
\begin{equation}
    X \xrightarrow{\quad\phi\,\circ\,\theta\quad} \widetilde{X},
\end{equation}
without explicitly taking into consideration the ``intermediate'' reduced dimension representation. However, this representation is an integral part of the actual dimensionality reduction task:
\begin{equation}\label{eq:reduced-dimension-graph}
    X \xrightarrow{\quad\theta\quad} U \xrightarrow{\quad\phi\quad} \widetilde{X}.
\end{equation}
We call~\cref{eq:reduced-dimension-graph} the \textit{reduced dimension representation graph}. Previous work \cite{Goodfellow-et-al-2016} has treated the canonical autoencoder problem~\cref{eq:canonical-autoencoder} as having the graphical representation~\cref{eq:reduced-dimension-graph}.

Following~\cref{eq:reduced-dimension-graph},  the reduced dimension representation problem can be decomposed into the following sub-problems: (i) given $X$,  find the reduced representation $U$, and (ii) given $U$, reconstruct $X$. 
As we only have access to the data $X$, the second sub-problem (ii) cannot be solved in isolation. \textit{Rather, we regard it as the following inverse problem: given $X$,  find a $U$ whose reconstruction resembles $X$.}

We now formalize this inverse problem as the \textit{dimensionality reduction task}.
Consider a smooth loss function $\!E(\widetilde{X}, X) : \mathbb{X} \times \mathbb{X} \to \mathbb{R}$ that quantifies the reconstruction error.
Assume that we are given a decoder~\cref{eq:decoder} function $\phi$. For some arbitrary data $X\in\mathbb{X}$, we want to find the optimal reduced dimension representation $U \in \mathbb{U}$ such that $\phi(U)$ reconstructs $X$. This problem is equivalent to finding a $U^{\phi, X} \in \mathbb{U}$ that minimizes $\!E$,
\begin{equation}\label{eq:dimensionality-reduction-task}
    U^{\phi, X} = \argmin_{U\, \in\, \mathbb{U}} \!E(\phi(U), X),
\end{equation}
where $\!E$  could be, but not necessarily is, the cost $\!L$ in~\cref{eq:canonical-autoencoder}. The superscript indicates that the optimal reduced order representation $U^{\phi, X}$ is defined with respect to a specific decoder $\phi$ and specific full representation $X$.

It is desirable that the dimensionality reduction task~\cref{eq:dimensionality-reduction-task} has a unique solution. We now show that under certain assumptions this is indeed the case.

\begin{theorem}\label{thm:unique-solution}
Let
\begin{equation}\label{eq:max-connected-set}
    \mathbb{B}^{\phi, X} = \left\{\mathbb{B} \mid \mathbb{B}\subset\mathbb{U} \text{ is max.-connected and }  \forall U \in \mathbb{B},\,\,  \!E(\phi(U), X) \text{ strictly convex in $U$}   \right\}
\end{equation}
define the set of all maximally-connected subsets of $\mathbb{U}$ such that $\!E(\phi(U), X)$ is strictly convex in the variable $U$ for a particular $\phi$ and $X$.

Assume there exists an oracle,
\begin{equation}\label{eq:convex-oracle}
   \mathfrak{O}^\phi : \mathbb{X} \to \mathbb{B}^{\phi, X},
\end{equation}
that maps each element of $X\in\mathbb{X}$ to some particular maximally-connected subset of $\mathbb{U}$ where $\!E(\phi(U), X)$ is convex in $U$ with the condition that,
\begin{equation}\label{eq:stable-set-oracle}
    \mathfrak{O}^\phi(X) = \mathfrak{O}^\phi\big(\phi(U^{\phi,X})\big),
\end{equation}
meaning that the reconstruction shares the same convex subset as the full representation.
Given any pair of decoder $\phi$ and full representation $X$, there exists a unique local minimum $U^{\phi,X}$ of $\!E(\phi(U), X)$ for any $X$ in some set in \cref{eq:max-connected-set}, such that the minimum for the reconstruction $\widetilde{X} \coloneqq \phi(U^{\phi,X})$, is identical to the minimum for the full data,
\begin{equation}\label{eq:full-and-reconstruction-reduced-shared}
    U^{\phi,\widetilde{X}} = U^{\phi,X}.
\end{equation}
\end{theorem}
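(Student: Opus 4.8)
The plan is to exploit the oracle $\mathfrak{O}^\phi$ to reduce the problem to a standard fact about strictly convex functions on connected sets, and then use the stability condition \eqref{eq:stable-set-oracle} to transfer uniqueness from $X$ to the reconstruction $\widetilde{X}$. First I would fix the decoder $\phi$ and the data point $X$, and set $\mathbb{B} \coloneqq \mathfrak{O}^\phi(X)$, which by \eqref{eq:convex-oracle} is an element of $\mathbb{B}^{\phi,X}$, hence a maximally-connected subset of $\mathbb{U}$ on which $U \mapsto \!E(\phi(U), X)$ is strictly convex. On such a set a strictly convex function has at most one local minimum, and any local minimum is the global minimum over $\mathbb{B}$; this gives existence and uniqueness of $U^{\phi,X}$ as the minimizer of $\!E(\phi(U),X)$ over $\mathbb{B}$, which is the ``some set in \eqref{eq:max-connected-set}'' referred to in the statement. (I would note that the argmin in \eqref{eq:dimensionality-reduction-task} is to be read as the minimization restricted to this oracle-selected convex region, so that $U^{\phi,X}$ is well-defined.)

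Next I would turn to the reconstruction. Define $\widetilde{X} \coloneqq \phi(U^{\phi,X})$. Applying the whole construction to $\widetilde{X}$ in place of $X$ yields a minimizer $U^{\phi,\widetilde{X}}$ over the convex region $\mathfrak{O}^\phi(\widetilde{X})$. The key observation is the stability hypothesis \eqref{eq:stable-set-oracle}: since $\widetilde{X} = \phi(U^{\phi,X})$, we have $\mathfrak{O}^\phi(\widetilde{X}) = \mathfrak{O}^\phi\big(\phi(U^{\phi,X})\big) = \mathfrak{O}^\phi(X) = \mathbb{B}$. So $U^{\phi,X}$ and $U^{\phi,\widetilde{X}}$ are both minimizers of functions over the \emph{same} convex region $\mathbb{B}$, but of a priori different functions, namely $U \mapsto \!E(\phi(U), X)$ and $U \mapsto \!E(\phi(U), \widetilde{X})$.

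To conclude $U^{\phi,\widetilde{X}} = U^{\phi,X}$ I would argue that $U^{\phi,X}$ is itself a local minimum of $U \mapsto \!E(\phi(U), \widetilde{X})$ over $\mathbb{B}$; uniqueness on the strictly convex region $\mathbb{B}$ then forces the two to coincide. This is where the essential content of ``reconstruction'' enters: at $U = U^{\phi,X}$ the value $\phi(U)$ equals $\widetilde{X}$ exactly, so $\!E(\phi(U^{\phi,X}), \widetilde{X}) = \!E(\widetilde{X}, \widetilde{X})$, which — for a reconstruction-error loss vanishing (and minimal) on the diagonal — is the global minimum of $\!E(\cdot, \widetilde{X})$ and hence a fortiori a local minimum of $U \mapsto \!E(\phi(U), \widetilde{X})$. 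I anticipate the main obstacle is precisely pinning down the minimal properties of $\!E$ needed here: the statement only says $\!E$ is smooth, so I would either invoke the implicit standing assumption that $\!E(Y,Y) \le \!E(Z,Y)$ for all $Y,Z$ (true for mean squared error and any metric-induced loss), or, more carefully, show that the first-order stationarity condition satisfied by $U^{\phi,X}$ for $\!E(\phi(\cdot),X)$ is inherited for $\!E(\phi(\cdot),\widetilde{X})$ via the chain rule together with $\widetilde{X}=\phi(U^{\phi,X})$ — observing that $\nabla_U \!E(\phi(U),Y)\big|_{U^{\phi,X}}$ depends on $Y$ only through $\phi(U^{\phi,X}) = \widetilde{X}$ when $Y$ is either $X$ (by optimality) or $\widetilde X$ (by construction), so the stationary point transfers, and strict convexity on $\mathbb{B}$ upgrades ``stationary'' to ``unique minimizer.''
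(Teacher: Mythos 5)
Your proposal follows the same overall architecture as the paper's proof --- use the oracle to restrict attention to a maximally-connected region of strict convexity, deduce uniqueness of the local minimum there, and then invoke the stability condition \eqref{eq:stable-set-oracle} to transfer the conclusion to the reconstruction --- but you handle the final transfer step more carefully than the paper does. The paper's proof establishes uniqueness via the first- and second-order optimality conditions and then simply asserts that because $\mathfrak{O}^\phi(X) = \mathfrak{O}^\phi(\widetilde{X})$, equation \eqref{eq:full-and-reconstruction-reduced-shared} follows; it never addresses the point you correctly isolate, namely that $U \mapsto \mathcal{E}(\phi(U), X)$ and $U \mapsto \mathcal{E}(\phi(U), \widetilde{X})$ are \emph{different} objectives even though they are minimized over the same set. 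Your diagonal argument --- that $\phi(U^{\phi,X}) = \widetilde{X}$ exactly, so $\mathcal{E}(\phi(U^{\phi,X}),\widetilde{X}) = \mathcal{E}(\widetilde{X},\widetilde{X})$ is globally minimal for any reconstruction-error loss satisfying $\mathcal{E}(Y,Y) \le \mathcal{E}(Z,Y)$, whence $U^{\phi,X}$ is the unique minimizer on the strictly convex region $\mathfrak{O}^\phi(\widetilde{X})$ --- is exactly the missing justification, and the extra hypothesis it requires is satisfied by the mean squared error cost the paper actually uses. What your version buys is a proof that actually closes the logical gap; what it costs is an additional (mild, implicit) assumption on $\mathcal{E}$. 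One shared caveat: both you and the paper treat existence of the minimizer somewhat loosely --- strict convexity on a connected set gives at most one local minimum but does not by itself guarantee the infimum is attained (the paper hedges by allowing a boundary minimizer, you assert existence outright); a fully rigorous statement would add compactness or coercivity on the oracle set.
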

\begin{proof}
If $\mathbb{B}^{\phi,X}$ contains at least one set,
then given some arbitrary $X\in\mathbb{X}$, there exists  $U^{\phi, X} \in \mathfrak{O}^\phi(X)$ such that either the gradient,
\begin{equation*}
    \left.\frac{\partial \!E(\phi(U), X)}{\partial U} \right\rvert_{U = U^{\phi, X}} = 0,
\end{equation*}
at that point, satisfies the first order optimality condition or $U^{\phi,X}$ is on the boundary of the set.
The Hessian is symmetric positive definite locally, on $\mathfrak{O}^\phi(X)$ by the convexity property,
\begin{equation*}
    \left.\frac{\partial^2 \!E(\phi(U), X)}{\partial U^2}\right\rvert_{U \in \mathfrak{O}^\phi(X)} > 0,
\end{equation*}
thus $U^{\phi, X}$ is a unique local minimum by the second order optimality condition~\cite{nocedal2006numerical}.

By~\cref{eq:stable-set-oracle}, the oracle defines the same set for each full dimension representation $X$ and corresponding  reconstruction $\widetilde{X}$, thus \cref{eq:full-and-reconstruction-reduced-shared} is satisfied, as required.
\end{proof}

\Cref{thm:unique-solution} does not guarantee that the global minimum of the dimensionality reduction task~\cref{eq:dimensionality-reduction-task} is found. It instead guarantees that, under its assumptions, it is possible to find a local minimum
\begin{equation}\label{eq:local-dimensionality-reduction-task}     
    U^{\phi, X} = \argmin_{U\, \in\, \mathfrak{O}^\phi(X) } \!E(\phi(U), X)
\end{equation}
within some subset of $\mathbb{U}$ given the oracle function~\cref{eq:convex-oracle}. This is a less ambitious goal, which we call the \textit{local dimensionality reduction task}.

If the conditions of \cref{thm:unique-solution} are satisfied, the local dimensionality reduction task~\cref{eq:local-dimensionality-reduction-task} provides a function from $X$ to $U^\phi$ that defines the optimal encoder,
\begin{equation}\label{eq:optimal-encoder}
    \theta^\phi(X) \coloneqq U^{\phi,X}, \quad \forall X \in \mathbb{X},
\end{equation}
implicitly in terms of the decoder $\phi$.
We now show that it is sufficient to define the  oracle~\cref{eq:convex-oracle} by an `initial condition function'.

\begin{corollary}\label{cor:initial-condition-function}
Given an `initial condition function' that maps some point $X$ onto a point on the subset of $\mathbb{U}$ defined by the oracle~\cref{eq:convex-oracle},
\begin{equation}\label{eq:encoder-initial-condition}
    \theta_0^\phi(X) = U_0^X \in \mathfrak{O}^\phi(X), \quad \theta_0^\phi : \mathbb{X} \to \mathbb{U},
\end{equation}
with the property that maps both a full order representation and its reconstruction to the same point~\cref{eq:right-inverse-property}, the property,
\begin{equation}\label{eq:condition-on-initial-condition-function}
     \theta_0^\phi(X) = \theta_0^\phi\left(\phi\left(\theta^\phi(X)\right)\right), \quad \forall X \in \mathbb{X},
\end{equation}
is sufficient to uniquely identify the optimal reduced dimension representation $U^{\phi, X}$.
\end{corollary}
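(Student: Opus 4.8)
The plan is to use the given initial condition function $\theta_0^\phi$ to manufacture the oracle $\mathfrak{O}^\phi$ required by \cref{thm:unique-solution}, and then to check that hypothesis \cref{eq:condition-on-initial-condition-function} is precisely what is needed to verify the stability condition \cref{eq:stable-set-oracle}. Concretely, for each $X \in \mathbb{X}$ I would define $\mathfrak{O}^\phi(X)$ to be the maximally-connected subset of $\mathbb{U}$ appearing in \cref{eq:max-connected-set} that contains the point $\theta_0^\phi(X) = U_0^X$; the hypothesis $U_0^X \in \mathfrak{O}^\phi(X)$ guarantees that at least one such subset exists, so $\mathbb{B}^{\phi,X}$ is nonempty and the construction is not vacuous.

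The first step is to argue this assignment is well-defined, i.e., that $\theta_0^\phi(X)$ lies in exactly one member of $\mathbb{B}^{\phi,X}$. Since $\phi$ and $\mathcal{E}$ are smooth, the map $U \mapsto \mathcal{E}(\phi(U),X)$ is $\mathcal{C}^2$, so the region $\{U \in \mathbb{U} : \partial^2_U \mathcal{E}(\phi(U),X) \succ 0\}$ on which it is locally strictly convex is open, and (as $\mathbb{U}$ is locally path-connected) its connected components are disjoint, open, and exhaust it. Identifying the members of $\mathbb{B}^{\phi,X}$ with these components, the point $\theta_0^\phi(X)$ — which lies in the region by the hypothesis $U_0^X \in \mathfrak{O}^\phi(X)$ — belongs to a single component, which we take as $\mathfrak{O}^\phi(X)$. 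This makes $\mathfrak{O}^\phi$ a genuine map $\mathbb{X} \to \mathbb{B}^{\phi,X}$ as in \cref{eq:convex-oracle}, compatible with the local dimensionality reduction task \cref{eq:local-dimensionality-reduction-task}, so that \cref{thm:unique-solution} supplies the implicitly defined encoder $\theta^\phi$ of \cref{eq:optimal-encoder} once \cref{eq:stable-set-oracle} is checked.

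The second step verifies \cref{eq:stable-set-oracle}. Set $\widetilde{X} \coloneqq \phi(\theta^\phi(X)) = \phi(U^{\phi,X})$. Hypothesis \cref{eq:condition-on-initial-condition-function} states exactly that $\theta_0^\phi(X) = \theta_0^\phi(\widetilde{X})$, i.e., the initial condition function sends $X$ and its reconstruction $\widetilde{X}$ to the same point of $\mathbb{U}$. Since $\mathfrak{O}^\phi(X)$ and $\mathfrak{O}^\phi(\widetilde{X})$ were each defined as the connected component of the local-strict-convexity region containing that common point, they coincide, which is \cref{eq:stable-set-oracle}. Applying \cref{thm:unique-solution} to this oracle then yields a unique local minimum $U^{\phi,X}$ of $\mathcal{E}(\phi(U),X)$ on $\mathfrak{O}^\phi(X)$ with $U^{\phi,\widetilde{X}} = U^{\phi,X}$ as in \cref{eq:full-and-reconstruction-reduced-shared}, so $U^{\phi,X}$ is uniquely identified, as claimed.

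The step I expect to be the main obstacle is the well-definedness argument: making rigorous sense of the phrase ``maximally-connected subset on which $\mathcal{E}(\phi(\cdot),X)$ is strictly convex'' in \cref{eq:max-connected-set}, since strict convexity is a property of a function on a convex set rather than a pointwise condition, and reconciling it with the connected-component picture that forces $X \mapsto \mathfrak{O}^\phi(X)$ to be single-valued. A secondary subtlety is the mild circularity in that \cref{eq:condition-on-initial-condition-function} refers to $\theta^\phi$, which only exists after the oracle is fixed; this is handled by reading the construction in the order above — oracle from $\theta_0^\phi$, then $\theta^\phi$ from \cref{thm:unique-solution}, then \cref{eq:condition-on-initial-condition-function} interpreted as a consistency constraint on the pair $(\theta_0^\phi,\theta^\phi)$.
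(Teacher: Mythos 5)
Your proposal is correct and follows essentially the same route as the paper's proof: the oracle set is pinned down as the unique maximally-connected subset containing $\theta_0^\phi(X)$ (so a point identifies its subset), and hypothesis \cref{eq:condition-on-initial-condition-function} is exactly what delivers the stability condition \cref{eq:stable-set-oracle} needed to invoke \cref{thm:unique-solution}. The only difference is that you make explicit the well-definedness argument (disjointness of the components of the local-strict-convexity region) that the paper states in one line as ``each point in any subset uniquely identifies it.''
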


\begin{proof}
As the oracle function~\cref{eq:convex-oracle} maps to maximally connected convex subsets of $\mathbb{U}$, each point in any subset uniquely identifies it, and thus, uniquely identifies the  optimal reduced dimension representation $U^{\phi, X}$. Moreover, as the initial condition function has the same output for the reconstruction by~\cref{eq:condition-on-initial-condition-function}, it satisfies all of the constraints of \cref{thm:unique-solution}, as required.
\end{proof}

\begin{corollary}\label{cor:newtons-method}
As the initial condition function from~\cref{cor:initial-condition-function} always defines a point on a convex subset, it is possible to make use of Newton's method~\cite{nocedal2006numerical} to find the local minimum $U^{\phi,X} \in \mathfrak{O}^\phi(X)$ through successive updates
\begin{equation}
\begin{split}
    U_{k+1} &= U_k + \Delta U_k,\quad  U_0 = \theta_0^\phi(X), \quad k = 0, \dots ,\\
    \Delta U_k &= -\left(\left.\frac{\partial^2 \!E(\phi(U), X)}{\partial U^2}\right\rvert_{U=U_k} \right)^{-1} \left.\frac{\partial \!E(\phi(U), X)}{\partial U} \right\rvert_{U = U_k},
    \end{split}
\end{equation}
and is guaranteed to converge to the minimum,
\begin{equation}
    U^{\phi, X} = \lim_{k\to\infty} U_k,
\end{equation}
by Newton-Kantorivich~\cite{ortega1968newton}.
\end{corollary}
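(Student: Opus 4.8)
The plan is to read the stated update as a Newton iteration for the stationarity map $g^{\phi,X}(U) \coloneqq \partial \!E(\phi(U),X)/\partial U$ restricted to the region $\mathfrak{O}^\phi(X)$, verify the hypotheses of the Newton--Kantorovich theorem on that region, and then identify the resulting limit with $U^{\phi,X}$ by appealing to \cref{thm:unique-solution}.

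First I would check that every step in the iteration is well defined so long as the iterates remain in $\mathfrak{O}^\phi(X)$. By the hypothesis on the initial condition function carried over from \cref{cor:initial-condition-function}, $U_0 = \theta_0^\phi(X) \in \mathfrak{O}^\phi(X)$; and on that region, exactly as in the proof of \cref{thm:unique-solution}, the Hessian $\partial^2 \!E(\phi(U),X)/\partial U^2$ is symmetric positive definite, hence invertible, so $\Delta U_k$ is defined and is a genuine descent direction. I would then record the regularity: since $\phi \in \mathcal{C}^\infty$ and $\!E$ is a smooth loss, $U \mapsto \!E(\phi(U),X)$ is smooth, so $g^{\phi,X}$ is $\mathcal{C}^1$ with locally Lipschitz derivative; on any closed ball about $U_0$ contained in $\mathfrak{O}^\phi(X)$ this yields a bound $\beta$ on the norm of the inverse Hessian at $U_0$, a bound $\gamma$ on $\norm{g^{\phi,X}(U_0)}$, and a Lipschitz constant $\lambda$ for the Hessian.

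Next I would invoke the Newton--Kantorovich theorem \cite{ortega1968newton}: when the Kantorovich product $h \coloneqq \beta\gamma\lambda$ does not exceed $\tfrac12$ and the ball of radius $\rho_* = (1-\sqrt{1-2h})/(\beta\lambda)$ about $U_0$ lies inside $\mathfrak{O}^\phi(X)$, the iterates stay in that ball and converge, at a quadratic rate, to a zero $U_\infty$ of $g^{\phi,X}$. Because the Hessian is positive definite throughout $\mathfrak{O}^\phi(X)$, $U_\infty$ is a strict local minimum of $\!E(\phi(\cdot),X)$ in that region; and \cref{thm:unique-solution} says there is exactly one such minimum, namely $U^{\phi,X}$. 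Hence $\lim_{k\to\infty}U_k = U_\infty = U^{\phi,X}$, which is the asserted conclusion.

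The main obstacle is exactly the smallness hypothesis $h \le \tfrac12$ together with the containment $\overline{B}(U_0,\rho_*) \subset \mathfrak{O}^\phi(X)$: a raw Newton step from an arbitrary point of $\mathfrak{O}^\phi(X)$ may overshoot the region, so the undamped iteration need not converge without extra structure. I would resolve this in one of two ways. Either I interpret the hypothesis on $\theta_0^\phi$ as implicitly requiring $U_0$ to lie in the Kantorovich basin of $U^{\phi,X}$ --- in line with the corollary's phrasing that ``it is possible to make use of Newton's method'' --- in which case the argument above applies and delivers the quadratic rate. Or I replace the raw update by a damped step $U_{k+1} = U_k + t_k\,\Delta U_k$ with an Armijo line search; since $\Delta U_k$ is a descent direction and $\!E(\phi(\cdot),X)$ is smooth with positive-definite Hessian on $\mathfrak{O}^\phi(X)$, global convergence of line-search Newton methods \cite{nocedal2006numerical} drives the iterates to a stationary point, which as above must be $U^{\phi,X}$. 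Either route establishes $U^{\phi,X} = \lim_{k\to\infty}U_k$.
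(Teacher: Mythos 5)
The paper supplies no proof of this corollary at all: the statement itself simply cites Newton--Kantorovich, and no \texttt{proof} environment follows it. Your write-up is therefore best read as an attempt to make the intended argument explicit, and in doing so you have correctly located the real issue. Casting the update as Newton's method for the stationarity map $g^{\phi,X}(U)=\partial\!E(\phi(U),X)/\partial U$, noting that the Hessian is positive definite on $\mathfrak{O}^\phi(X)$ so each step is well defined, and then identifying the limit with the unique local minimum of \cref{thm:unique-solution} is exactly the route the citation suggests. But, as you observe, $U_0=\theta_0^\phi(X)\in\mathfrak{O}^\phi(X)$ together with strict convexity on that set does \emph{not} give the Kantorovich hypotheses for free: one still needs the smallness condition $h=\beta\gamma\lambda\le\tfrac12$ and the containment of the Kantorovich ball in $\mathfrak{O}^\phi(X)$, and an undamped Newton step can overshoot and exit the region, after which nothing is controlled. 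So the corollary as literally stated overclaims, and your two repairs (assume $U_0$ lies in the convergence basin, or damp the step with a line search) are the standard ways to close the gap.

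Two loose ends remain even in your repaired versions. First, global convergence of damped Newton requires that the sublevel set of $U_0$ be compact and contained in $\mathfrak{O}^\phi(X)$; strict convexity on a maximally connected subset does not guarantee this, so that hypothesis should be stated. Second, the proof of \cref{thm:unique-solution} explicitly allows $U^{\phi,X}$ to lie on the boundary of $\mathfrak{O}^\phi(X)$, in which case the gradient need not vanish there and a root of $g^{\phi,X}$ is the wrong target altogether; both your argument and the corollary implicitly assume an interior stationary point. Neither issue undermines your diagnosis --- your account is more careful than the paper's --- but a complete proof would need to add these as hypotheses.
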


\begin{figure}[t]
    \centering
    \scalebox{1.4}{%
    \begin{tikzpicture}[scale=1.25]

\node at (0.6,1.25) {$\mathbb{X}$};

\draw[-{Latex[open]}] (1,-0.5) -- (1, 1.5);
\draw[-{Latex[open]}] (1,-0.5) -- (4, -0.5);
\draw[-{Latex[open]}] (1,-0.5) -- (0, -1.5);

\draw [color=RoyalBlue,fill=RoyalBlue, fill opacity=0.2] plot [smooth, tension=1] coordinates {(3,0.75) (2.5,0) (2.1, -1.3)} -- plot [smooth, tension=1] coordinates {(2.1, -1.3) (2.8,-1.2) (3.5, -1.3)} -- plot [smooth, tension=1] coordinates {(3.5, -1.3) (3.9,0) (4.4, 0.75)} -- plot [smooth, tension=1] coordinates {(4.4, 0.75) (3.7,0.85) (3,0.75)};
\node[text=RoyalBlue!80!black] at (3.3,0.5) {$\widetilde{\mathbb{X}}$};

\node at (3.2, 0)[circle,fill,inner sep=1pt] {};
\node at (3.2, -0.25) {$\widetilde{X}$};

\node at (0.8, 0.5)[circle,fill,inner sep=1pt] {};
\node at (0.8, 0.25) {$X$};

\draw[color=black,-{Stealth[scale=2]}, dashed] plot [smooth, tension=1] coordinates {(0.8,0.5) (2.0, 0.75) (3.18, 0.02)};
\node at (1.7, 0.45) {$\theta^\phi\circ\phi$};

\draw[-{Latex[open]}] (5,-0.3) -- (5, 1.42);
\draw[-{Latex[open]}] (5,-0.3) -- (6.72, -0.3);

\draw[draw = none, color=gray,pattern=north west lines, fill opacity=0.2] (5,-0.3) -- (5, 1.3) -- (6.6, 1.3) -- (6.6, -0.3) -- (5, -0.3);
\node[text=gray!80!black] at (5.8,-0.6) {$\mathbb{U}$};

\draw[color=black, opacity=0.4] plot [smooth cycle, tension=1] coordinates {(5.3,0.8) (6.3, 1.0) (5.38, -0.22)};
\draw[color=black, opacity=0.3] plot [smooth cycle, tension=1] coordinates {(5.4,0.7) (6.1, 0.8) (5.38, -0.12)};
\draw[color=black, opacity=0.2] plot [smooth cycle, tension=1] coordinates {(5.5,0.6) (5.9, 0.6) (5.38, -0.02)};

\node at (5.42, 0.16)[circle,fill,inner sep=1pt] {};
\node at (5.49, -0.05) {$U^{\phi,X}$};
  
\node at (6.2, 0.9)[circle,fill,inner sep=1pt] {};
\node at (6.44, 0.85) {$U_0$};

\draw[snake=zigzag,-{Stealth[scale=2]},segment amplitude=1pt, ,segment length=4.5pt] (6.2, 0.9) -- (5.44, 0.18);

\node at (6.5, -1.2) {$U_{k+1} = U_k + \Delta U_k$};
\draw[color=black,dashed] plot [smooth, tension=1] coordinates {(5.89, 0.6) (6.6, 0.3) (6.84, -1)};

\draw[color=black,-{Stealth[scale=2]}] plot [smooth, tension=1] coordinates {(0.8,0.5) (3.5, 1.65) (6.18, 0.92)};
\node at (3.5, 1.4) {$\theta_0$};

\draw[color=black,-{Stealth[scale=2]}] plot [smooth, tension=1] coordinates {(5.4, 0.18) (4.2, 0.54) (3.22, 0.02)};
\node at (4.3, 0.3) {$\phi$};

\end{tikzpicture}%
    }
    \caption{A demonstration of the inner-workings of~\cref{thm:unique-solution} coupled with the initial condition function from~\cref{cor:initial-condition-function}. The data $X$ is first encoded with the initial condition function $\theta_0$ into the reduced dimension representation $U_0$. Subsequent updates $\Delta U_k$ with Newton's method (\cref{cor:newtons-method}) are performed to get the minimum $U^\phi$ of the local dimensionality reduction  task~\cref{eq:local-dimensionality-reduction-task}. When decoded with $\phi$, this sequence defines the reconstruction $\widetilde{X}$.}
    \label{fig:Newtons-method}
\end{figure}
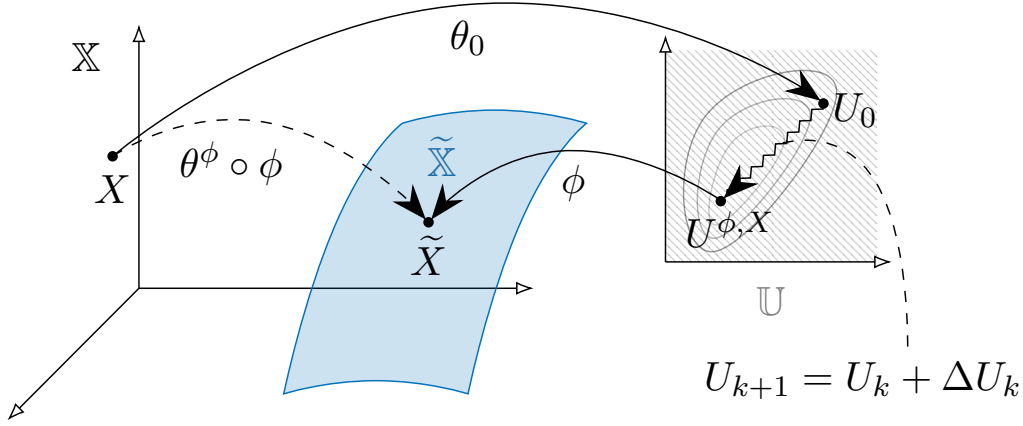

\Cref{fig:Newtons-method} illustrates how the reduced dimension representation of some $X$ is found through Newton's method (\cref{cor:newtons-method}) and decoded back to its corresponding reconstruction $\widetilde{X}$.

We now show that the solution to the dimensionality reduction task~\cref{eq:dimensionality-reduction-task} defines an encoder with the right-inverse property~\cref{eq:right-inverse-property}.

\begin{theorem}\label{thm:dimensionality-reduction-right-inverse}
Given a fixed injective decoder $\phi(\cdot)$ consider the corresponding encoder $\theta^\phi(\cdot)$ in~\cref{eq:optimal-encoder}  implicitly defined by the solution to the local dimensionality reduction task~\cref{eq:local-dimensionality-reduction-task}. Under the assumption of~\cref{thm:unique-solution}, this encoder has the right-inverse property~\cref{eq:right-inverse-property}  for any $X\in\mathbb{X}$.
\end{theorem}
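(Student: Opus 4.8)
\textit{The plan is to read the right-inverse property straight off the definition of the implicitly-defined encoder, using \cref{thm:unique-solution} as the workhorse.} By \cref{eq:optimal-encoder} we have $\theta^\phi(X) = U^{\phi,X}$, so, writing $\widetilde{X} \coloneqq \phi(U^{\phi,X})$, the identity $\theta^\phi\!\big(\phi\big(\theta^\phi(X)\big)\big) = \theta^\phi(X)$ that we must establish is literally the statement $\theta^\phi(\widetilde{X}) = U^{\phi,X}$: solving the local dimensionality reduction task~\cref{eq:local-dimensionality-reduction-task} for the reconstruction $\widetilde{X}$ must return the very representation that produced it. Since $\theta^\phi(\widetilde X) = U^{\phi,\widetilde X}$, this is exactly \cref{eq:full-and-reconstruction-reduced-shared}, so the content of the proof is to assemble the ingredients of \cref{thm:unique-solution} cleanly for this specific pair of optimization problems.

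First I would invoke the stable-oracle condition~\cref{eq:stable-set-oracle}, which gives $\mathfrak{O}^\phi(\widetilde{X}) = \mathfrak{O}^\phi\big(\phi(U^{\phi,X})\big) = \mathfrak{O}^\phi(X)$; set $\mathbb{B} \coloneqq \mathfrak{O}^\phi(X)$. Thus the two argmin problems in play---the one defining $U^{\phi,X}$ with target $X$, and the one defining $\theta^\phi(\widetilde{X})$ with target $\widetilde{X}$---range over the same maximally-connected set $\mathbb{B}$, on which, by \cref{eq:max-connected-set}, $\!E\big(\phi(\cdot),\widetilde{X}\big)$ is strictly convex in $U$. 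Next I would check that $U^{\phi,X}$ is a local minimizer of $U \mapsto \!E\big(\phi(U),\widetilde{X}\big)$ on $\mathbb{B}$: because $\widetilde{X} = \phi(U^{\phi,X})$, the inner argument $\phi(U)$ equals the target $\widetilde{X}$ at $U = U^{\phi,X}$, so the reconstruction error attains its minimum there, and the chain rule yields the first-order condition $\partial_U \!E\big(\phi(U),\widetilde{X}\big) = 0$ at $U = U^{\phi,X}$ (or $U^{\phi,X}$ lies on the boundary of $\mathbb{B}$, exactly as in the case analysis of \cref{thm:unique-solution}). Strict convexity on $\mathbb{B}$ then makes $U^{\phi,X}$ the unique local minimum of $\!E\big(\phi(\cdot),\widetilde{X}\big)$ on $\mathbb{B}$, so the local dimensionality reduction task for $\widetilde{X}$ returns $\theta^\phi(\widetilde{X}) = U^{\phi,X} = \theta^\phi(X)$, which is \cref{eq:right-inverse-property}.

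The injectivity hypothesis on $\phi$ enters precisely at the step I expect to require the most care: guaranteeing that the minimal reconstruction error over $\mathbb{B}$ is attained at $U^{\phi,X}$ and nowhere else. If $\phi$ were allowed to collapse a distinct point $U' \in \mathbb{B}$ onto $\widetilde{X}$, then $\!E\big(\phi(\cdot),\widetilde{X}\big)$ would attain its minimum at two points of $\mathbb{B}$---contradicting strict convexity and, more importantly, destroying the unambiguous identification of $\theta^\phi(\widetilde{X})$ with $U^{\phi,X}$. Injectivity rules this out and makes the correspondence $\widetilde{X} \leftrightarrow U^{\phi,X}$ well-posed. Beyond that, the argument is bookkeeping layered on \cref{thm:unique-solution}: once oracle stability~\cref{eq:stable-set-oracle} aligns the feasible sets and injectivity pins down the unique minimizer of the target-perturbed objective, the right-inverse property~\cref{eq:right-inverse-property} holds for every $X \in \mathbb{X}$.
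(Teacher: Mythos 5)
Your proof is correct and follows essentially the same route as the paper's: unfold $\theta^\phi\big(\phi\big(\theta^\phi(X)\big)\big)$ into the argmin of $\mathcal{E}(\phi(\cdot),\widetilde{X})$ over $\mathfrak{O}^\phi(\widetilde{X})$ and conclude via \cref{eq:full-and-reconstruction-reduced-shared} from \cref{thm:unique-solution}. You in fact supply more justification than the paper does, which leaves implicit both the observation that $\phi(U^{\phi,X})=\widetilde{X}$ makes $U^{\phi,X}$ a minimizer of $U\mapsto\mathcal{E}(\phi(U),\widetilde{X})$ and the role of injectivity in making that minimizer unique.
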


\begin{proof}
Consider an arbitrary decoder $\phi$ and data point $X$. The optimal reduced dimension representation of  $X$ is defined by~\cref{eq:local-dimensionality-reduction-task}, to be $U^{\phi,X}$, then,
\begin{equation*}
    \theta^\phi\left(\phi\left(\theta^\phi\left(X\right)\right)\right) =  
    \theta^\phi(\phi(U^{\phi,X})) = \argmin_{U\, \in\, \mathfrak{O}^\phi(\widetilde{X})} \!E(\phi(U),\widetilde{X}) = U^{\phi,X} = \theta^\phi(X),
\end{equation*}
satisfying the right-inverse property~\cref{eq:right-inverse-property} as required. 
\end{proof}

\begin{corollary}\label{cor:weak-right-inverse-implied-dimensionality-reduction}
By the construction used in the proof in \cref{thm:dimensionality-reduction-right-inverse}, the right inverse property~\cref{eq:right-inverse-property} implies the optimality of the encoder for all $X$ in the image of the decoder $\widetilde{\mathbb{X}} = \phi(\mathbb{U})$.
\end{corollary}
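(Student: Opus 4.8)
The plan is to reuse the chain of equalities from the proof of \cref{thm:dimensionality-reduction-right-inverse}, but to run it starting from an arbitrary point of the decoder's image rather than from an arbitrary datum $X$. Fix $Y \in \widetilde{\mathbb{X}} = \phi(\mathbb{U})$ and, by injectivity of $\phi$, write $Y = \phi(V)$ with $V \in \mathbb{U}$. First I would note that $V$ globally minimizes $U \mapsto \!E(\phi(U), Y)$: because $\!E$ quantifies reconstruction error, $\!E(\phi(V), Y) = \!E(Y, Y)$ is the global minimum of $\!E(\,\cdot\,, Y)$, which no value $\!E(\phi(U), Y)$ can undercut. Next I would invoke the oracle condition~\cref{eq:stable-set-oracle} together with \cref{thm:unique-solution}: since the component assigned to $Y$ must be one of the maximally-connected strictly-convex sets of~\cref{eq:max-connected-set} and must be stable under reconstruction, and since $V$ sits in such a set, $\mathfrak{O}^\phi(Y)$ is the strictly-convex component containing $V$, and the unique local minimum produced there by \cref{thm:unique-solution} is $U^{\phi,Y} = V$. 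Hence the implicitly-defined encoder~\cref{eq:optimal-encoder} obeys $\theta^\phi(Y) = V$ and $\phi(\theta^\phi(Y)) = Y$, so on $\widetilde{\mathbb{X}}$ it attains the global optimum of the dimensionality reduction task~\cref{eq:dimensionality-reduction-task}, not merely the local optimum of~\cref{eq:local-dimensionality-reduction-task} --- this is the asserted optimality of the encoder.

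I would then make explicit that this is precisely what the right-inverse property~\cref{eq:right-inverse-property} buys. Applying the construction above to the particular point $Y = \widetilde{X} = \phi(\theta^\phi(X))$, which lies in $\widetilde{\mathbb{X}}$ by definition, reproduces verbatim the identities $\theta^\phi(\phi(\theta^\phi(X))) = U^{\phi,\widetilde X} = U^{\phi,X} = \theta^\phi(X)$ from the proof of \cref{thm:dimensionality-reduction-right-inverse}; conversely, every element of $\widetilde{\mathbb{X}} = \phi(\mathbb{U})$ is of the form $\widetilde X$ for some $X$ (take $X = \phi(V)$ and use $U^{\phi,\phi(V)} = V$ from the previous paragraph), so knowing $\theta^\phi(\widetilde X) = U^{\phi,X}$ for all $X$ is exactly knowing that $\theta^\phi$ returns the optimal reduced representation at every point of $\widetilde{\mathbb{X}}$. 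The write-up of the corollary can then be compressed to: apply the argument of \cref{thm:dimensionality-reduction-right-inverse} to an arbitrary $Y = \phi(V) \in \widetilde{\mathbb{X}}$ and use that $\!E(\phi(V), Y) = \!E(Y,Y)$ is globally minimal.

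The step I expect to be the genuine obstacle is the claim $V \in \mathfrak{O}^\phi(Y)$, i.e. that $\{V\}$ actually lies inside one of the strictly-convex components in~\cref{eq:max-connected-set}. At a global minimizer the Hessian $\partial^2 \!E(\phi(U),Y)/\partial U^2$ is only guaranteed to be positive semidefinite; strict convexity in a neighborhood of $V$ --- which is what licenses the oracle to return the component containing $V$ --- requires a nondegeneracy assumption on $\phi$ at $V$, for instance that $\phi$ is an immersion there and that $\!E$ is locally strongly convex in its first argument. I would either fold this into the standing hypotheses of \cref{thm:unique-solution} or state it explicitly in the corollary; granted it, the remaining steps are the same first- and second-order optimality bookkeeping already carried out for \cref{thm:unique-solution}, together with the trivial fact about the diagonal of $\!E$.
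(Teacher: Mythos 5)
Your reconstruction is essentially the argument the paper intends: the corollary carries no separate proof, only the instruction to read the chain of equalities in \cref{thm:dimensionality-reduction-right-inverse} in the other direction, and your observation that for $Y=\phi(V)$ the value $\!E(\phi(V),Y)=\!E(Y,Y)$ is globally minimal (so $\theta^\phi(Y)=V$ and the encoder is exactly optimal on the decoder's image) is precisely the content of the middle equality $\theta^\phi(\widetilde{X})=\argmin_{U\in\mathfrak{O}^\phi(\widetilde{X})}\!E(\phi(U),\widetilde{X})=U^{\phi,X}$ in that proof. Your caveat that $V\in\mathfrak{O}^\phi(Y)$ is not automatic --- strict convexity at a global minimizer needs a nondegeneracy condition on $\phi$ --- is a legitimate gap, but it is one the paper itself silently assumes rather than one you introduce. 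The one place you overreach is the scope of the conclusion: the right-inverse property $\theta\circ\phi\circ\theta=\theta$ only constrains $\theta\circ\phi$ on the image $\theta(\mathbb{X})$, so as an \emph{implication from the right-inverse property} (which is how the corollary is later used, to argue that enforcing \cref{eq:right-inverse-property} is a weak form of \cref{eq:local-dimensionality-reduction-task} for a generic $\theta\in\*\Theta$) optimality is only forced on $\phi(\theta(\mathbb{X}))$, not on all of $\phi(\mathbb{U})$; your extension to every $Y=\phi(V)$ with arbitrary $V\in\mathbb{U}$ instead uses the defining property of the implicit encoder $\theta^\phi$, for which optimality holds by construction and the right-inverse hypothesis is not actually doing any work. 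The paper's own statement $\widetilde{\mathbb{X}}=\phi(\mathbb{U})$ shares this looseness, so this is a point to flag rather than a defect unique to your write-up.
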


\begin{figure}[t]
    \centering
    \begin{minipage}[c]{0.499\textwidth}
    \centering
    \scalebox{1.4}{%
    \begin{tikzpicture}[scale=1.25]
  
  \draw [color=red,fill=red, fill opacity=0.2] plot [smooth cycle, tension=1] coordinates {(1,0) (2,1) (2.5,0) (3,-1) (2,-1)};
  \node[text=red!80!black] at (2.25,-0.5) {$\widehat{\mathbb{X}}$};

    \draw [color=RoyalBlue,fill=RoyalBlue, fill opacity=0.2] plot [smooth cycle, tension=1] coordinates {(0.5,0) (1,1) (2, 0)};
    \node[text=RoyalBlue!80!black] at (0.75,0.5) {$\widetilde{\mathbb{X}}$};

    \draw[color=black,-{Stealth[scale=2]}] plot [smooth, tension=1] coordinates {(2,-1) (1.5, -1.7) (1, -1.8)};
    \node at (1.4, -1.4) {$\theta$};

    \draw [color=gray,pattern=north west lines, fill opacity=0.2] plot [smooth cycle, tension=1] coordinates {(0.5,-1.75) (0.75,-1.5) (1, -1.75) (0.75,-2)};
    \node[text=gray!80!black] at (0.75,-1.75) {$\widehat{\mathbb{U}}$};
    
    \draw[color=black,-{Stealth[scale=2]}] plot [smooth, tension=1] coordinates {(0.6,-1.55) (0.4, -1) (0.75, -0.14)};
    \node at (0.65, -1) {$\phi$};
\end{tikzpicture}%
    }%
    \end{minipage}%
    \begin{minipage}[c]{0.499\textwidth}
     \centering
    \scalebox{1.4}{%
    \begin{tikzpicture}[scale=1.25]
  
  \draw [color=red,fill=red, fill opacity=0.2] plot [smooth cycle, tension=1] coordinates {(1,0) (2,1) (2.5,0) (3,-1) (2,-1)};
  \node[text=red!80!black] at (2.25,-0.5) {$\widehat{\mathbb{X}}$};

    \draw [color=RoyalBlue,fill=RoyalBlue, fill opacity=0.2] plot [smooth cycle, tension=1] coordinates {(0.5,0) (1,1) (2, 0)};
    \node[text=RoyalBlue!80!black] at (0.75,0.5) {$\widetilde{\mathbb{X}}$};

    \draw [color=CadetBlue, dashed] plot [smooth cycle, tension=1] coordinates {(0.25,0) (0.75, 1.5)  (2.75, 1) (3, -1.5)};
    \node at (1.5,1.3) {$\textcolor{RoyalBlue!80!black}{\widetilde{\mathbb{X}}}\cup\textcolor{red!80!black}{\widehat{\mathbb{X}}}$};

    \draw[color=black,-{Stealth[scale=2]}] plot [smooth, tension=1] coordinates {(1.75,-1.25) (1.5, -1.7) (1, -1.8)};
    \node at (1.45, -1.4) {$\theta^\phi$};

    \draw [color=gray,pattern=north west lines, fill opacity=0.2] plot [smooth cycle, tension=1] coordinates {(0.5,-1.75) (0.75,-1.5) (1, -1.75) (0.75,-2)};
    \node[text=gray!80!black] at (0.75,-1.75) {$\widehat{\mathbb{U}}$};
    
    \draw[color=black,-{Stealth[scale=2]}] plot [smooth, tension=1] coordinates {(0.6,-1.55) (0.4, -1) (0.75, -0.14)};
    \node at (0.65, -1) {$\phi$};
\end{tikzpicture}%
    }%
    \end{minipage}
    \caption{A visualization of the set action of the canonical autoencoder \cref{eq:canonical-autoencoder} (left panel) as compared to the meta-autoencoder \cref{eq:meta-autoencoder} (right panel). The pre-image of $\widehat{\mathbb{U}}$ in the optimal encoder is a union of $\widehat{\mathbb{X}}$ and $\widetilde{\mathbb{X}}$. This visualizes the concepts in~\cref{thm:pre-image-correction}.}
    \label{fig:space-difference}
\end{figure}
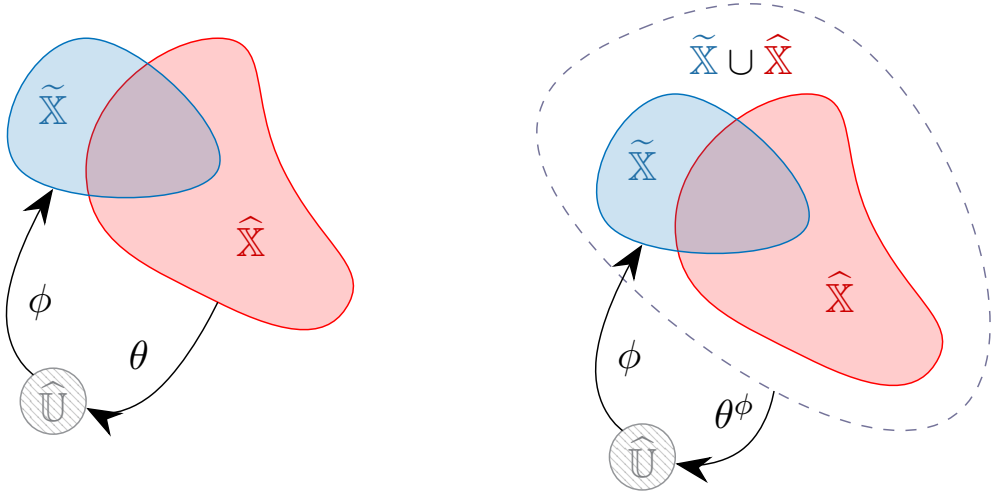

We now show that the pre-image violation~\cref{eq:pre-image-violation} does not occur for the optimal encoder~\cref{eq:optimal-encoder} defined by the local dimensionality reduction task~\cref{eq:local-dimensionality-reduction-task}.

\begin{theorem}\label{thm:pre-image-correction}
The image of $\widehat{\mathbb{U}}$ under the decoder $\phi$ is a subset of preimage of $\widehat{\mathbb{U}}$ under the optimal encoder $\theta^{\phi}$ given by~\cref{eq:optimal-encoder},
\begin{equation}
  \widetilde{\mathbb{X}} =  \phi(\widehat{\mathbb{U}}) \subset (\theta^{\phi})^{-1}(\widehat{\mathbb{U}}),
\end{equation}
when the latter is a solution to the local dimensionality reduction task~\cref{eq:local-dimensionality-reduction-task}.
This avoids the pre-image violation~\cref{eq:pre-image-violation}.
\end{theorem}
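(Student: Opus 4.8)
The plan is to derive the claim directly from the right-inverse property established in \cref{thm:dimensionality-reduction-right-inverse}, since the pre-image violation \cref{eq:pre-image-violation} is precisely the failure of that property at the level of sets. First I would unwind the definitions relevant to the meta-autoencoder: here the encoder is $\theta^\phi$ from \cref{eq:optimal-encoder}, the set of encodings is $\widehat{\mathbb{U}} = \theta^\phi(\widehat{\mathbb{X}})$, and the reconstruction set is $\widetilde{\mathbb{X}} = \phi(\widehat{\mathbb{U}})$. The goal is then to show that every element of $\widetilde{\mathbb{X}}$ lies in $(\theta^\phi)^{-1}(\widehat{\mathbb{U}})$, i.e., that encoding any reconstruction lands back in $\widehat{\mathbb{U}}$.

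The key step is a pointwise chase. Take an arbitrary $\widetilde{X} \in \widetilde{\mathbb{X}} = \phi(\widehat{\mathbb{U}})$. By definition there is some $U \in \widehat{\mathbb{U}}$ with $\widetilde{X} = \phi(U)$, and since $U \in \widehat{\mathbb{U}} = \theta^\phi(\widehat{\mathbb{X}})$ there is an $X \in \widehat{\mathbb{X}}$ with $U = \theta^\phi(X) = U^{\phi,X}$. Hence $\widetilde{X} = \phi(\theta^\phi(X))$. Applying \cref{thm:dimensionality-reduction-right-inverse}, whose hypotheses (injectivity of $\phi$ and the assumptions of \cref{thm:unique-solution}) we carry over, gives $\theta^\phi(\widetilde{X}) = \theta^\phi(\phi(\theta^\phi(X))) = \theta^\phi(X) = U$. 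Since $U \in \widehat{\mathbb{U}}$, this says $\theta^\phi(\widetilde{X}) \in \widehat{\mathbb{U}}$, i.e. $\widetilde{X} \in (\theta^\phi)^{-1}(\widehat{\mathbb{U}})$. As $\widetilde{X}$ was arbitrary, $\widetilde{\mathbb{X}} \subset (\theta^\phi)^{-1}(\widehat{\mathbb{U}})$, which is exactly the negation of the pre-image violation \cref{eq:pre-image-violation}.

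Optionally I would close by recording the slightly stronger statement depicted in \cref{fig:space-difference}: since $\theta^\phi$ restricted to $\widehat{\mathbb{X}}$ maps into $\widehat{\mathbb{U}}$ by definition of $\widehat{\mathbb{U}}$, we in fact have $\widehat{\mathbb{X}} \cup \widetilde{\mathbb{X}} \subset (\theta^\phi)^{-1}(\widehat{\mathbb{U}})$, so the pre-image of the reduced set under the optimal encoder contains the union of the data set and the reconstruction set. I do not expect any genuine obstacle here, since the mathematical content is entirely contained in \cref{thm:dimensionality-reduction-right-inverse}; the only point requiring care is the bookkeeping between the two meanings of ``encoder'' and ``encodings'' (canonical versus meta), so that $\widehat{\mathbb{U}}$ is consistently read as $\theta^\phi(\widehat{\mathbb{X}})$ throughout the argument.
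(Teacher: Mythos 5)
Your proposal is correct and follows the same route as the paper: both derive the inclusion directly from the right-inverse property of \cref{thm:dimensionality-reduction-right-inverse}, with your version simply spelling out the element-wise chase that the paper compresses into the single observation $\theta^\phi(\widetilde{\mathbb{X}}) = \widehat{\mathbb{U}}$. No gaps; the bookkeeping you flag (reading $\widehat{\mathbb{U}}$ as $\theta^\phi(\widehat{\mathbb{X}})$ throughout) is handled correctly.
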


\begin{proof}
By \cref{thm:dimensionality-reduction-right-inverse}, the image of the reconstruction space $\widetilde{\mathbb{X}}$ through the encoder is the reduced dimension set $\widehat{\mathbb{U}}$,
\begin{equation*}
    \theta^\phi(\widetilde{\mathbb{X}}) = \widehat{\mathbb{U}},
\end{equation*}
and therefore,
\begin{equation}
    \widetilde{\mathbb{X}} \subset \theta^{\phi, -1}(\widehat{\mathbb{U}}).
\end{equation}
\end{proof}

An illustration of the set action of the canonical and new encoders and decoders is provided in \cref{fig:space-difference}, showing a visual representation of the concepts in~\cref{thm:pre-image-correction}.

From the proof in \cref{thm:pre-image-correction}, it is clear that every reconstruction $\widetilde{X}\in\widetilde{\mathbb{X}}$ is encodable by the the optimal encoder.

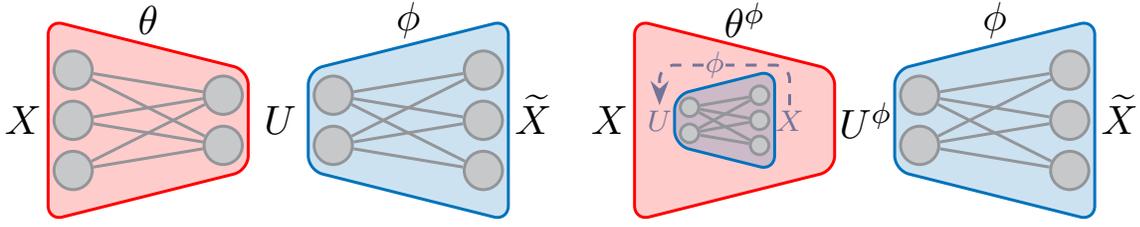
\begin{figure}[t]
    \centering
    \begin{minipage}[c]{0.499\textwidth}
    \centering
    \scalebox{1.4}{%
    \begin{tikzpicture}[scale=0.95]

\node (X) at (-0.25,-1) {$X$};
\node (X) at (2.32,-1) {$U$};
\node (X) at (4.85,-0.937) {$\widetilde{X}$};
 \path[draw, color=red, fill=red, fill opacity=0.2, rounded corners, thick] (0,0)
    -- (2, -0.5)
    -- (2,-1.5) 
    -- (0,-2) 
    -- cycle;
    
\node at (1, 0) {$\theta$};
\node at (3.6, 0) {$\phi$};
\node[draw,thick,circle,color=gray,fill=gray!50,minimum size=0.15] (Xn1) at (0.25,-0.5) {};
\node[draw,thick,circle,color=gray,fill=gray!50,minimum size=0.15] (Xn2) at (0.25,-1) {};
\node[draw,thick,circle,color=gray,fill=gray!50,minimum size=0.15] (Xn3) at (0.25,-1.5) {};

\node[draw,thick,circle,color=gray,fill=gray!50,minimum size=0.15] (Un1) at (1.75,-0.75) {};
\node[draw,thick,circle,color=gray,fill=gray!50,minimum size=0.15] (Un2) at (1.75,-1.25) {};

\draw[color=gray, thick] (Xn1) -- (Un1);
\draw[color=gray, thick] (Xn2) -- (Un1);
\draw[color=gray, thick] (Xn3) -- (Un1);

\draw[color=gray, thick] (Xn1) -- (Un2);
\draw[color=gray, thick] (Xn2) -- (Un2);
\draw[color=gray, thick] (Xn3) -- (Un2);

 \path[draw, color=RoyalBlue, fill=RoyalBlue, fill opacity=0.2, rounded corners, thick] (2.6,-0.5)
    -- (4.6, 0)
    -- (4.6,-2) 
    -- (2.6,-1.5) 
    -- cycle;

\node[draw,thick,circle,color=gray,fill=gray!50,minimum size=0.15] (Xtn1) at (4.35,-0.5) {};
\node[draw,thick,circle,color=gray,fill=gray!50,minimum size=0.15] (Xtn2) at (4.35,-1) {};
\node[draw,thick,circle,color=gray,fill=gray!50,minimum size=0.15] (Xtn3) at (4.35,-1.5) {};

\node[draw,thick,circle,color=gray,fill=gray!50,minimum size=0.15] (Utn1) at (2.85,-0.75) {};
\node[draw,thick,circle,color=gray,fill=gray!50,minimum size=0.15] (Utn2) at (2.85,-1.25) {};

\draw[color=gray, thick] (Xtn1) -- (Utn1);
\draw[color=gray, thick] (Xtn2) -- (Utn1);
\draw[color=gray, thick] (Xtn3) -- (Utn1);

\draw[color=gray, thick] (Xtn1) -- (Utn2);
\draw[color=gray, thick] (Xtn2) -- (Utn2);
\draw[color=gray, thick] (Xtn3) -- (Utn2);

\end{tikzpicture}%
    }%
    \end{minipage}%
    \begin{minipage}[c]{0.499\textwidth}
     \centering
     \scalebox{1.4}{%
    \begin{tikzpicture}[scale=0.95]

\node (X) at (-0.25,-1) {$X$};
\node (X) at (2.31,-1) {$U^\phi$};
\node (X) at (4.85,-0.937) {$\widetilde{X}$};
 \path[draw, color=red, fill=red, fill opacity=0.2, rounded corners, thick] (0,0)
    -- (2, -0.5)
    -- (2,-1.5) 
    -- (0,-2) 
    -- cycle;

\node at (1.1, 0) {$\theta^\phi$};
\node at (3.6, 0) {$\phi$};
\node[color=CadetBlue] (U) at (0.25,-1) {\scriptsize $U$};
\node[color=CadetBlue] (X) at (1.55,-1) {\scriptsize $X$};
\path[draw, color=RoyalBlue, fill=RoyalBlue, fill opacity=0.2, rounded corners, thick] (0.4,-0.75)
    -- (1.4, -0.5)
    -- (1.4,-1.5) 
    -- (0.4,-1.25) 
    -- cycle;
    

\path[dashed, color=CadetBlue,draw, rounded corners, thick] (1.55,-0.85) -- (1.55,-0.45) -- (0.91,-0.45);

\path[dashed, color=CadetBlue,draw, rounded corners, thick, -{Stealth[scale=1]}] (0.69,-0.45) -- (0.25,-0.45) -- (0.25, -0.85);
    
\node[draw,thick,circle,color=gray,fill=gray!50,inner sep=0,minimum size=5] (Un1) at (0.55,-0.87) {};
\node[draw,thick,circle,color=gray,fill=gray!50,inner sep=0,minimum size=5] (Un2) at (0.55,-1.13) {};

\node[draw,thick,circle,color=gray,fill=gray!50,inner sep=0,minimum size=5] (Xn1) at (1.25,-0.75) {};
\node[draw,thick,circle,color=gray,fill=gray!50,inner sep=0,minimum size=5] (Xn2) at (1.25,-1) {};
\node[draw,thick,circle,color=gray,fill=gray!50,inner sep=0,minimum size=5] (Xn3) at (1.25,-1.25) {};

\draw[color=gray, thick] (Xn1) -- (Un1);
\draw[color=gray, thick] (Xn2) -- (Un1);
\draw[color=gray, thick] (Xn3) -- (Un1);

\draw[color=gray, thick] (Xn1) -- (Un2);
\draw[color=gray, thick] (Xn2) -- (Un2);
\draw[color=gray, thick] (Xn3) -- (Un2);

\node[color=CadetBlue] at (0.8, -0.45) {\scriptsize $\phi$};

 \path[draw, color=RoyalBlue, fill=RoyalBlue, fill opacity=0.2, rounded corners, thick] (2.6,-0.5)
    -- (4.6, 0)
    -- (4.6,-2) 
    -- (2.6,-1.5) 
    -- cycle;

\node[draw,thick,circle,color=gray,fill=gray!50,minimum size=0.15] (Xtn1) at (4.35,-0.5) {};
\node[draw,thick,circle,color=gray,fill=gray!50,minimum size=0.15] (Xtn2) at (4.35,-1) {};
\node[draw,thick,circle,color=gray,fill=gray!50,minimum size=0.15] (Xtn3) at (4.35,-1.5) {};

\node[draw,thick,circle,color=gray,fill=gray!50,minimum size=0.15] (Utn1) at (2.85,-0.75) {};
\node[draw,thick,circle,color=gray,fill=gray!50,minimum size=0.15] (Utn2) at (2.85,-1.25) {};

\draw[color=gray, thick] (Xtn1) -- (Utn1);
\draw[color=gray, thick] (Xtn2) -- (Utn1);
\draw[color=gray, thick] (Xtn3) -- (Utn1);

\draw[color=gray, thick] (Xtn1) -- (Utn2);
\draw[color=gray, thick] (Xtn2) -- (Utn2);
\draw[color=gray, thick] (Xtn3) -- (Utn2);

\end{tikzpicture}%
    }%
    \end{minipage}
    \caption{The left panel illustrates a canonical autoencoder~\cref{eq:canonical-autoencoder} where both the encoder and the decoder are represented by neural networks. The right panel illustrates the meta-autoencoder \cref{eq:meta-autoencoder} where the encoder is defined by an optimization procedure \eqref{eq:meta-autoencoder} in terms of the decoder.%
}
    \label{fig:AE-difference}
\end{figure}

The dimensionality reduction task~\cref{eq:dimensionality-reduction-task} implicitly defines the optimal encoder~\cref{eq:optimal-encoder} for some given decoder $\phi$. In order to completely replace the canonical autoencoder \cref{eq:canonical-autoencoder}, we need a way to find the optimal decoder $\phi_*$. We can think of the problem of finding $\phi_*$ that optimally reconstructs the (local) reduced dimension representations ~\cref{eq:local-dimensionality-reduction-task} for all data $\mathbb{X}$ as a meta-task that defines a top-level optimization problem. The resulting bi-level optimization problem is:
\begin{equation}\label{eq:meta-autoencoder}
\framebox{$
\begin{gathered}
    \phi_* = \argmin_{\phi\,\in\,\*\Phi}~ \mathbb{E}_{X}\!L_\phi(\phi(\theta^\phi(X)), X),\\
    \text{s.t.}\quad \theta^\phi(X) = U^\phi = \argmin_{U\, \in\, \mathfrak{O}^\phi(\widetilde{X})} ~\!E(\phi(U), X).
\end{gathered}
$}
\end{equation}
Problem \eqref{eq:meta-autoencoder} finds the optimal decoder $\phi_*$ (and potentially initial condition function $\theta_0$ required by \cref{thm:unique-solution}), and implicitly defines the optimal encoder $\theta^\phi$ by posing the dimensionality reduction task~\cref{eq:dimensionality-reduction-task} as a constraint. The cost function $\!L_\phi$ can be, but is not necessarily equal to, the cost function $\!L$ in \cref{eq:canonical-autoencoder}.
Note that while the set of possible decoders $\*\Phi$ is unchanged from~\cref{eq:canonical-autoencoder}, the set the set of all possible encoders is not. We denote the set of all optimal encoders implicitly defined in terms of decoders in the set $\*\Phi$ by
\begin{equation}
    \*\Theta^\phi = \set*{ \theta^\phi \mid \theta^\phi(X) \text{ solves \cref{eq:local-dimensionality-reduction-task} for all $X\in\mathbb{X}$ for a } \phi \in \*\Phi}.
\end{equation}

In the context of meta-learning, the dimensionality reduction task~\cref{eq:dimensionality-reduction-task} learns  optimal representations of the data, while the meta-task~\cref{eq:meta-autoencoder} learns the optimal decoder in a statistical sense. 
We therefore name \cref{eq:meta-autoencoder} the `meta-autoencoder'.
An illustration of the neural network formulations of the canonical autoencoder and the meta-autoencoder is given in \cref{fig:AE-difference}.

We have shown that resolving one problem identified in \cref{sec:motivation}, namely explicitly accounting for the optimal reduced dimension representation~\cref{eq:reduced-dimension-graph}, resulted in recasting the autoencoder problem as a bi-level optimization~\cref{eq:meta-autoencoder} one. By including the solution to the dimensionality reduction task~\cref{eq:dimensionality-reduction-task}, two additional issues that we have identified in \cref{sec:motivation} have been addressed: the pre-image violation problem~\cref{eq:pre-image-violation} solved by~\cref{thm:pre-image-correction}, and the violation of the right-inverse property~\cref{eq:right-inverse-property} solved by \cref{thm:dimensionality-reduction-right-inverse}. We have thus layed forward a convincing argument for why the autoencoder problem should be recast as the meta-autoencoder of the form ~\cref{eq:meta-autoencoder}.

\begin{remark}
While, for the sake of exposition, we assume that the cost functions $\!L_\phi$ and $\!E$ in the bi-level optimization~\cref{eq:meta-autoencoder} are identical to the cost $\!L$ in the canonical autoencoder, that is not necessarily the case.
In fact, $\!L_\phi$ and $\!E$ can also be \textit{competing} costs. It may be the case that the cost $\!E$ attempt to find a reduced dimensional representation $U$ that preserves one optimal set of features of $X$ while the cost $\!L_\phi$ attempts to find a decoder $\phi$ that preserves an entirely different set of features. In this regard, a whole new class of non-linear dimensionality reduction algorithms is possible.
\end{remark}

\begin{remark}
Instead of solving the bi-level optimization problem presented in~\cref{eq:meta-autoencoder} in an exact fashion, an alternative is its simplification to a constrained optimization problem by the use of first-order optimality conditions similar to~\cite{cioaca2014optimization},
\begin{equation}
    \begin{gathered}
        \phi_*, U_* = \argmin_{\phi\,\in\,\*\Phi,\,U\, \in\, \mathfrak{O}^\phi(X)}~ \mathbb{E}_{X}\!L_\phi(\phi(U), X),\\
        \text{s.t.}\quad 0 = \nabla_U\!E(\phi(U), X),
\end{gathered}
\end{equation}
where the decoder and reduced order representation are solved for at the same time.
This formulation is not further explored in this work.
\end{remark}

It is clear from \cref{thm:pre-image-correction} that the meta-autoencoder~\cref{eq:meta-autoencoder} is not equivalent to the canonical autoencoder~\cref{eq:canonical-autoencoder}. It is also clear that conventional, linear methods for dimensionality reduction do not suffer from the problems that we have identified in \cref{sec:motivation}. This raises the question: under what scenario are the meta~\cref{eq:meta-autoencoder} and the canonical~\cref{eq:canonical-autoencoder} formulations of the autoencoder equivalent?
We show that when the projection and interpolation operators are taken to be linear and orthogonal, and the cost functions are taken to be mean squared error, then the canonical autoencoder~\cref{eq:canonical-autoencoder} and the meta-autoencoder~\cref{eq:meta-autoencoder} are equivalent.

\begin{theorem}\label{thm:meta-canonical-equivalence}
Consider the set of linear decoders $\*\Phi$ defined by column orthonormal matrices $\*V \in \mathbb{R}^{n\times r}$, and the set of linear encoders  $\*\Theta$ defined by row orthonormal matrices $\*W \in \mathbb{R}^{r\times n}$:
\begin{equation}\label{eq:linear-assumption}
    \phi(U) = \*V U, \quad \*V^T\,\*V=\*I_{r}; \quad \theta(X) = \*W X, \quad \*W\,\*W^T=\*I_{r}.
\end{equation}
Additionally, consider mean squared error cost functions:
\begin{equation}\label{eq:mse-assumption}
    \!L(\widetilde{X},X) = \!L_\phi(\widetilde{X},X) = \!E(\widetilde{X},X) = \frac{1}{n}\norm{\widetilde{X} - X}_2^2.
\end{equation}
Under the assumptions that both the encoder and decoder are linear \eqref{eq:linear-assumption} and all cost functions are of the form \eqref{eq:mse-assumption}, the solutions to the canonical autoencoder problem~\cref{eq:canonical-autoencoder} and the meta-autoencoder~\cref{eq:meta-autoencoder} are equivalent under vector-wise sign change.
\end{theorem}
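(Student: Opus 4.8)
The plan is to collapse both optimization problems onto the same classical proper orthogonal decomposition (POD) problem and then read off the equality of their solution sets. \textbf{Step 1: solve the lower-level problem.} For a fixed column-orthonormal $\*V$ and a fixed $X$, I would write out $\!E(\phi(U),X)=\tfrac1n\norm{\*VU-X}_2^2$; because $\*V^\transp\*V=\*I_r$, its Hessian in $U$ is the constant positive-definite matrix $\tfrac2n\*I_r$, so $\!E$ is globally strictly convex in $U$. Hence $\mathbb{B}^{\phi,X}$ in~\cref{eq:max-connected-set} is the single set $\mathbb{U}=\mathbb{R}^r$, the oracle $\mathfrak{O}^\phi$ is forced to map every $X$ to $\mathbb{R}^r$, condition~\cref{eq:stable-set-oracle} is automatic, and the hypotheses of~\cref{thm:unique-solution} hold for free. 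The first-order condition $U-\*V^\transp X=0$ then gives the unique minimizer $\theta^\phi(X)=U^{\phi,X}=\*V^\transp X$, whose induced encoder $\*W=\*V^\transp$ is itself row-orthonormal since $\*W\*W^\transp=\*V^\transp\*V=\*I_r$.

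\textbf{Step 2: reduce the meta-autoencoder.} Substituting $\theta^\phi(X)=\*V^\transp X$ into the upper level of~\cref{eq:meta-autoencoder} and using the cyclic property of the trace together with $\*V^\transp\*V=\*I_r$, a short computation turns the meta problem into minimizing $\tr(\*\Sigma)-\tr(\*V^\transp\*\Sigma\,\*V)$ over column-orthonormal $\*V$, where $\*\Sigma\coloneqq\mathbb{E}_X[XX^\transp]$ is the symmetric positive semidefinite second-moment matrix. This is precisely POD: the optimal $\*V$ are exactly those whose columns span a dominant $r$-dimensional invariant subspace of $\*\Sigma$, determined uniquely up to a vector-wise sign change of the columns (and, in degenerate cases, an orthogonal mixing inside a tied eigenspace).

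\textbf{Step 3: reduce the canonical autoencoder and conclude.} For~\cref{eq:canonical-autoencoder} I would fix $\*V$ and minimize $\mathbb{E}_X\tfrac1n\norm{\*V\*WX-X}_2^2$ over $\*W$ first. Expanding and differentiating, the unconstrained stationarity condition is $\*W\*\Sigma=\*V^\transp\*\Sigma$, a solution of which is $\*W=\*V^\transp$; the decisive observation is that this particular solution is feasible for the row-orthonormality constraint, so it is also a minimizer over the constrained set. Substituting it back reduces the canonical problem to the same objective $\tr(\*\Sigma)-\tr(\*V^\transp\*\Sigma\,\*V)$ obtained in Step~2, so the canonical and meta problems share the same set of optimal decoders; pairing each optimal $\*V$ with $\*W=\*V^\transp$ matches the optimal encoders as well, which yields the claimed equivalence up to vector-wise sign change.

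The main obstacle is Step~3: the canonical problem constrains $\*W$ to the manifold of row-orthonormal matrices, and when $\*\Sigma$ is singular the stationarity equation $\*W\*\Sigma=\*V^\transp\*\Sigma$ has infinitely many solutions. The argument survives because the particular choice $\*W=\*V^\transp$ is simultaneously feasible and a global minimizer of the inner problem, so imposing the constraint costs nothing; I would also be careful to phrase the conclusion at the level of solution sets rather than a single pair $(\phi_*,\theta_*)$, since the optimizers are inherently non-unique (sign flips, and rotations within tied eigenspaces), matching the ``equivalent under vector-wise sign change'' wording of the statement.
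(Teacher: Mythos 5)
Your proposal is correct and follows essentially the same route as the paper's proof: solve the inner problem in closed form to get $\theta^\phi(X)=\*V^\transp X$, and reduce both the canonical and the meta formulations to the same dominant-eigenvector (Karhunen--Lo\`eve/POD) problem for $\mathbb{E}_X[XX^\transp]$, unique up to column sign flips. Your Step~3 is in fact more careful than the paper's, which simply asserts $\*W_*=\*V_*^\transp$ by appeal to Karhunen--Lo\`eve optimality, whereas you verify via the stationarity condition $\*W\*\Sigma=\*V^\transp\*\Sigma$ that this choice is both feasible and globally optimal for the constrained inner minimization.
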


\begin{proof}
We first examine the canonical autoencoder~\cref{eq:canonical-autoencoder}. Under the given assumptions, the problem,
\begin{equation*}
    \*V_*, \*W_* = \argmin_{\*V\,\in\,\*\Phi, \*V\,\in\,\*\Theta} \mathbb{E}_{X}\frac{1}{n}\norm{\*V\*W X - X}_2^2,
\end{equation*}
reduces to finding the rank $r$ matrix $\*V_*\*W_*$ that defines the optimal subspace of the reconstruction $\widetilde{\mathbb{X}}$. 
This problem is equivalent to finding the $r$ dominant eigenvectors of the covariance  of $X$,
\begin{equation*}
    \*V_*\*\Lambda_r\*W_* =  \*V_*\*\Lambda_r\*V_*^T \approx \mathbb{E}_X X X^T,
\end{equation*}
where $\*W_* = \*V_*^T$, by the optimality of the Karhunen-Lo\`eve expansion~\cite{smith2013uncertainty}.

We next examine the local dimensionality reduction task~\cref{eq:local-dimensionality-reduction-task} in the linear orthogonal case for some decoder , under a least-squares cost function,
\begin{equation*}
    U^{\*V, X} = \argmin_{U\, \in\, \mathfrak{O}^{\*V}(X) } \frac{1}{n} \norm{\*V U - X}_2^2.
\end{equation*}
We note that the problem is convex over all of $\mathbb{R}^r$, thus the oracle function is $\mathfrak{O}^{\*V}(X) = \mathbb{R}^r$ trivially for any $X$. It is therefore the case that the dimensionality reduction task~\cref{eq:local-dimensionality-reduction-task}  simply reduces to a linear least-squares problem, with the solution
\begin{equation*}
     U^{\*V, X} = \*V^T\, X.
\end{equation*}
We now focus our attention to the meta-task of finding the optimal decoder $\*V$ under a least-squares cost function,
\begin{equation*}
    \*V_* = \argmin_{\*V\,\in\,\*\Phi} \mathbb{E}_{X}\frac{1}{n}\norm{\*V\*V^T X - X}_2^2,
\end{equation*}
which again reduces to finding the $r$ dominant eigenvectors of the covariance  of $X$,
\begin{equation}
    \*V_*\*\Lambda_r\*V_*^T \approx \mathbb{E}_X X X^T.
\end{equation}
As the $r$ dominant eigenvectors are unique modulo a sign change, the two problem formulations produce identical results under the given assumptions, as required.
\end{proof}

The proof of \cref{thm:meta-canonical-equivalence} shows that in the orthogonal linear case with mean squared error cost, the local dimensionality reduction task~\cref{eq:local-dimensionality-reduction-task} has the same result as the meta-autoencoder problem \eqref{eq:meta-autoencoder}. In this scenario there is no need to explicitly split the dimensionality reduction problem as in \eqref{eq:meta-autoencoder}.

\begin{remark}
It is interesting to note that there are two requirements for~\cref{thm:meta-canonical-equivalence} to be valid. One is the linearity and orthogonality of the operators~\cref{eq:linear-assumption}, the other is that both cost functions $\!L_\phi$ and $\!E$ in the meta-autoencoder~\cref{eq:meta-autoencoder} are equal and of mean squared error form~\cref{eq:mse-assumption}. Giving up the orthogonality and the equality the cost functions assumptions may lead to linear dimensionality reduction techniques different from existing formulations.
\end{remark}

\section{Implementation of Meta-Autoencoders}
\label{sec:meta-auto-encoder-issues}

We now turn our attention towards implementing the meta-autoencoder through neural-network-based representations. Assume that the set $\*\Phi$ in \cref{eq:meta-autoencoder} is some family of neural networks parameterized by weights $\*W^\phi$.
In this scenario, a full solution to the meta-autoencoder problem~\cref{eq:meta-autoencoder} would involve finding said weights by a full solution to the bi-level optimization problem, through the use of the optimal encoder~\cref{eq:optimal-encoder}.
Therefore, there are two potentially large computational costs that need to be addressed by a practical implementation of the meta-autoencoder:
(i) the cost of solving the local dimensionality reduction task~\cref{eq:local-dimensionality-reduction-task} repeatedly due to the bi-level form of the optimization \cref{eq:meta-autoencoder}, and (ii) the cost of training the meta-autoencoder, which can be significantly more expensive than training the canonical autoencoder~\cref{eq:canonical-autoencoder}.
We discuss two ways to bootstrap existing network evaluation and training procedures to alleviate these costs.

\subsection{Autoencoder Correction}
\label{sec:auto-encoder-correction}

\begin{algorithm}[t]
\caption{Autoencoder correction}
\label{alg:encoder-corrected}
\begin{algorithmic}[1]
\STATE{Input: set of data points $X\in\widehat{\mathbb{X}}$, set of neural networks for the encoder, $\*\Theta$, and for the decoder, $\*\Phi$.}
\STATE{Output: the corrected encoder $\theta^{\phi}$ and decoder $\phi$.}
\STATE{\% Find an encoder and decoder pair that satisfies~\cref{eq:canonical-autoencoder}}
\STATE{$\phi_*,\,\theta_*  \xleftarrow{} \argmin_{\phi\,\in\*\Phi,\,\theta\,\in\*\Theta} \mathbb{E}_{X}\!L\big(\phi\left(\theta(X)\right), X\big)$}
\STATE{\% Define the encoder $\theta_*$ to be the initial condition function from \cref{cor:initial-condition-function}}
\STATE{$\theta_0^\phi \coloneqq \theta_*$}
\STATE{\% Construct the corrected encoder $\theta^{\phi_*}$  by solving the local dimensionality reduction task~\cref{eq:local-dimensionality-reduction-task} }
\STATE{$\theta^{\phi_*}(X) \coloneqq \argmin_{\substack{U\, \in\, \mathfrak{O}^\phi_*(X)\\U_0 = \theta_0^\phi(X)} } \!E(\phi_*(U), X)$}
\RETURN $\theta^\phi = \theta^{\phi_*}, \phi = \phi_*$
\end{algorithmic}
\end{algorithm}

We first consider solving challenge (i) where the cost of training an autoencoder is the biggest bottleneck. In this scenario we are not concerned with challenge (ii) where evaluating the resulting autoencoder is more expensive than evaluating the canonical autoencoder~\cref{eq:canonical-autoencoder}.

We introduce a strategy by which existing canonical autoencoders~\cref{eq:canonical-autoencoder} can be corrected through a post-processing step. 
The procedure, described by algorithm~\cref{alg:encoder-corrected} is as follows: first, train an encoder and decoder pair through the canonical autoencoder~\cref{eq:canonical-autoencoder}. Second, define the encoder to be the initial condition function. Finally, define the corrected encoder by solving the local dimensionality reduction task~\cref{eq:local-dimensionality-reduction-task}.

The procedure above does not solve the full meta-autoencoder problem~\cref{eq:meta-autoencoder}, but instead only aims to solve the local dimensionality reduction task~\cref{eq:local-dimensionality-reduction-task}.
There are no guarantees, however that this procedure will solve any of the issues identified in~\cref{sec:motivation}.
From a practical point of view, this correction can be readily applied to existing autoencoders, without a full retraining procedure, or can be used as a post-processing step in most operational workflows.

\subsubsection{Right-invertible Autoencoders}

We next look at the opposite end of the spectrum, where training or retraining an autoencoder is not a significant factor, but the cost of evaluating the autoencoder dominates. In this scenario, we wish to eliminate the definition of the implicit encoder~\cref{eq:optimal-encoder} by eliminating the local dimensionality reduction task~\cref{eq:local-dimensionality-reduction-task} from the meta-autoencoder~\cref{eq:meta-autoencoder} formulation, and replacing it with a condition that is simpler to satisfy. 

By eliminating the implicit definition of the encoder, we reintroduce the set $\*\Theta$ of all possible encoders from the canonical autoencoder~\cref{eq:canonical-autoencoder}, to the already existing set of all decoders $\*\Phi$.
Taking the results from \cref{thm:dimensionality-reduction-right-inverse} and  \cref{cor:weak-right-inverse-implied-dimensionality-reduction}, it is clear that enforcing the right-inverse property~\cref{eq:right-inverse-property} is a weaker form of enforcing the local dimensionality reduction task~\cref{eq:local-dimensionality-reduction-task}. 
We can therefore replace the optimization constraint in the meta-autoencoder problem~\cref{eq:meta-autoencoder} with the enforcement of the right-inverse property~\cref{eq:right-inverse-property} over the data in the canonical autoencoder~\cref{eq:canonical-autoencoder}:
\begin{equation}\label{eq:right-invertible-autoencoder}
\begin{gathered}
    \phi_*,\, \theta_* = \argmin_{\phi\,\in\*\Phi,\, \theta\,\in\*\Theta} \mathbb{E}_{X}\!L_\phi(\phi\big(\theta(X)\big), X)\\
    \text{s.t.}\quad \theta\left(\phi\left(\theta\left(X\right)\right)\right) =  \theta\left(X\right),\quad \forall X \in \widehat{\mathbb{X}}. 
\end{gathered}
\end{equation}
We call the resulting formulation~{eq:right-invertible-autoencoder} the \textit{right-invertible autoencoder}. 

The formulation~\cref{eq:right-invertible-autoencoder} has several potential issues. First, and foremost, it is possible that the constraint cannot be satisfied by the chosen sets $\*\Phi$ and $\*\Theta$ of decoders and encoders, i.e., there is no feasible solution satisfying the constraints, as practical neural-networks are universal approximators~\cite{hornik1989multilayer} up to a certain measure of error.
A second, more subtle problem, is that $\phi\big(\theta(X)\big)$ might not be encodable by all the encoders $\theta\in\*\Theta$, thereby preventing the solution of~\cref{eq:right-invertible-autoencoder} through practical means. 

If we restrict our sets of decoders $\*\Phi$ and encoders $\*\Theta$ to some classical neural network architectures (such as dense feed-forward neural networks), then the problem of encodability can be hand-waved away. However, the problem of satisfying the constraint comes to the forefront as a prominent issue.
As finite neural networks are universal function approximators~\cite{aggarwal2018neural}, up to some certain error, there may exist functions in the chosen sets $\*\Phi$ and $\*\Theta$ of decoders and encoders that approximate the enforcement of the right-inverse property in~\cref{eq:right-invertible-autoencoder} in a weak manner as is typical in `physics-informed' machine learning approaches~\cite{willard2020integrating}.

Given some cost $\!L_R : \mathbb{U} \times \mathbb{U} \to \mathbb{R}$, a weak-constrained formulation of~\cref{eq:right-invertible-autoencoder} can be constructed,
\begin{equation}\label{eq:weak-right-invertible-autoencoder}
    \phi_*,\, \theta_* = \argmin_{\phi\,\in\*\Phi,\, \theta\,\in\*\Theta} \mathbb{E}_{X}\left[\!L_\phi(\phi\big(\theta(X)\big), X) + \lambda\!L_R\left(\theta\left(\phi\left(\theta\left(X\right)\right)\right) , \theta\left(X\right)\right) \right],
\end{equation}
parameterized by the hyper-parameter $\lambda$. This formulation is inexpensive to implement and is a natural extension of the canonical autoencoder formulation~\cref{eq:canonical-autoencoder}. It has been utilized for model order reduction in \cite{popov2021nlmfenkf}.

\begin{remark}
It is of independent interest to derive neural network architectures that satisfy exactly the right-inverse property.
\end{remark}

\begin{remark}
Assuming that $\!L_R \geq 0$ over all possible inputs, an alternative formulation to \cref{eq:weak-right-invertible-autoencoder} would be through an inequality constrained formulation,
\begin{equation}\label{eq:inequality-right-invertible-autoencoder}
\begin{gathered}
    \phi_*,\, \theta_* = \argmin_{\phi\,\in\*\Phi,\, \theta\,\in\*\Theta} \mathbb{E}_{X}\!L_\phi(\phi\big(\theta(X)\big), X)\\
    \text{s.t.}\quad \!L_R\left(\theta\left(\phi\left(\theta\left(X\right)\right)\right), \theta\left(X\right) \right) \le \epsilon,\quad \forall X \in \mathbb{X}. 
\end{gathered}
\end{equation}
where the right invertibility constraint is preserved up to some threshold parameter $\epsilon$.
\end{remark}

\subsection{Practical Implementation}

\begin{algorithm}[t]
\caption{Optimal encoder $\theta^\phi$ through  a non-linear least squares Gauss-Newton method}
\label{alg:encoder-practical}
\begin{algorithmic}[1]
\STATE{Input: sample $X$, initial encoder $\theta_0^\phi$, decoder $\phi$, number of Newton iterations $I$}
\STATE{Output: the meta-encoded representation $\theta^\phi(X) = U^{\phi,X}$}
\STATE{\% Get the initial condition for the Newton iterations}
\STATE{$U_0 = \theta_0^\phi(X)$}
\FORALL{newton iterations $i$ from $1$ to $I$}
\STATE{\% Calculate the reconstruction}
\STATE{$\widetilde{X} \xleftarrow{}{} \phi(U_{i-1})$}
\STATE{\% Evaluate the reconstruction mismatch}
\STATE{$F \xleftarrow{} \widetilde{X} - X$}
\STATE{\% Evaluate the Jacobian by automatic differentiation}
\STATE{$J \xleftarrow{} \big(\left.\nabla_U F\right|_{U_{i-1}}\big)^T$}
\STATE{\% Solve the sub-problem for the Newton direction}
\STATE{$p \xleftarrow{} (J^T J)^{-1} J^T F$}\label{line:linear-solve}
\STATE{\% update the reduced state estimate}
\STATE{$U_i \xleftarrow{} U_{i-1} - p$}
\ENDFOR
\RETURN $U^{\phi,X} = U_I$
\end{algorithmic}
\end{algorithm}

We provide a practical implementation of the optimal encoder~\cref{eq:optimal-encoder} that does not address the issues outlined in \cref{sec:meta-auto-encoder-issues}, namely that the cost to train and the cost to evaluate the encoder are increased, however, it is guaranteed to address the problems identified in \cref{sec:motivation}.

In order to solve the meta-autoencoder problem~\cref{eq:meta-autoencoder} we will be employing a method similar to the one discussed in~\cite{ji2021bilevel}. Instead of treating the local dimensionality reduction task~\cref{eq:local-dimensionality-reduction-task} as a separate optimization problem, we treat it as an encoder~\cref{eq:optimal-encoder} with the optimization algorithm embedded into its formulation. In order to find gradients of~\cref{eq:meta-autoencoder} we back-propagate \textit{through another optimization algorithm}. This is similar to how many PDE-constrained optimization algorithms such as 4D-Var~\cite{asch2016data} operate. 

\begin{remark}
Applying existing methods for solving stochastic-deterministic bi-level optimization problems~\cite{shisheng2021complexity} to the meta-autoencoder problem~\cref{eq:meta-autoencoder} is of independent interest.
\end{remark}

Using the machinery discussed in~\cref{cor:initial-condition-function}, there are two functions that must be defined: the decoder $\phi$, and the initial condition $\theta_0$. We take both of these  functions to be implemented by neural networks. We denote the set of all possible decoders of a given neural network architecture by $\*\Phi$. Similarly we assume that the set of all possible initial condition functions of a given neural network architecture is given by the set  $\*\Theta^0$. We assume that both these sets can be parameterized by some collection of weights, which we call $W^\phi$ for the weights of the decoder and $W^{\theta_0}$ for the weights of the initial condition function.
For the sake of simplicity we will ignore the condition~\cref{eq:condition-on-initial-condition-function}.

For the rest of this section, we will assume that the dimensionality reduction task cost function is the least squares cost function~\cref{eq:mse-assumption} which is amenable to solution by the Gauss-Newton algorithm. One possible implementation of the optimal encoder through the Gauss-Newton method is given in~\cref{alg:encoder-practical}.

\begin{remark}
The Gauss-Newton algorithm given in~\cref{alg:encoder-practical} is only practical for relatively small reduced dimension sizes $r$. Computing the Jacobian $J$ can potentially be expensive for many neural-network architectures. 
Additionally, linear system solves can have their own issues such as negative-definite matrices or computational intensity.
The linear solve in \cref{line:linear-solve} of \cref{alg:encoder-practical} could be performed so as to match the Levenberg–Marquardt algorithm~\cite{nocedal2006numerical},
\begin{equation}
    p \xleftarrow{} (\gamma I_r + J^T J)^{-1} J^T F,
\end{equation}
where the factor $\gamma$ is a damping factor.
Methods such as L-BFGS~\cite{liu1989limited_lbfgs} can also mitigate many of said challenges. Integrating these ideas into a performant implementation is outside the scope of this paper. 
\end{remark}

In practice we only have access to a finite subset of $N$ data points $\{X_1, X_2, \dots, X_N\} \subset\mathbb{X}$, which are assumed to be exchangeable samples from the underlying distribution.
The practical optimization problem reads
\begin{equation}\label{eq:practical-optimization-problem}
\begin{gathered}
     W^\phi_*,\, W^{\theta_0}_* = \argmin_{W^\phi,\, W^{\theta_0}} \frac{1}{N} \sum_{i=1}^N\!L_\phi(\phi(U^\phi_i), X_i),\\
    \text{s.t.}\,\, U^\phi_i = \theta^\phi(X_i),
    \end{gathered}
\end{equation}
where $\theta^\phi$ is defined in terms of $\theta_0$ which in turn is parametrized by $W^{\theta_0}$.

As stated before, when the optimization problem~\cref{eq:practical-optimization-problem} is solved with conventional gradient based stochastic optimization methods such as stochastic gradient descent, the back-propagation is carried out  through the Gauss-Newton iterations. This is an additional challenge for the practical solution of the meta-autoencoder problem~\cref{eq:meta-autoencoder} and a costly barrier to its practical implementation.

\section{Numerical Illustration}
\label{sec:numerical-illustration}

\begin{figure}[t]
    \centering
    \begin{tikzpicture}
        \node (sixes) at (0,0) {\includegraphics[width=0.65\linewidth]{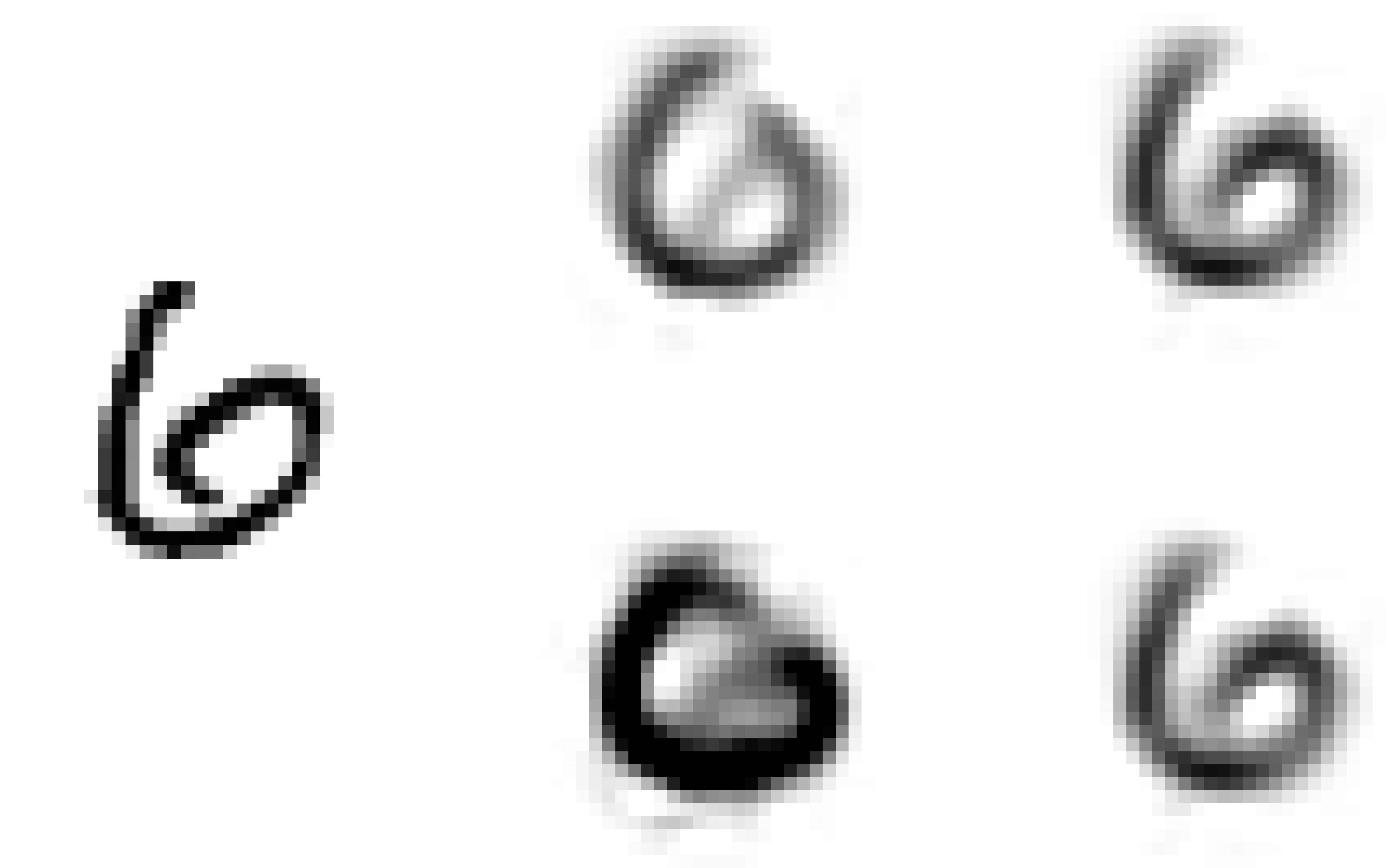}};%
        \node (test) at (-3.6, 1.7) {\large\textbf{\textsf{(i) Original}}};
        \node (test) at (0, 3.5) {\large\textbf{\textsf{(ii) Canonical}}};
        \node (test) at (3.75, 3.5) {\large\textbf{\textsf{(iii) Meta}}};
        \node (test) at (7.5, 2.1) {\large\textbf{\textsf{First Reconstruction}}};
        \node (test) at (7.5, -1.6) {\large\textbf{\textsf{100th Reconstruction}}};
    \end{tikzpicture}
   
    \caption{From left to right: (i) the original handwritten sample of six, (ii) a canonical autoencoder \cref{eq:canonical-autoencoder} reconstruction with its 100th reconstruction, and (iii) the reconstruction obtained by the meta-autoencoder \cref{eq:meta-autoencoder} with its 100th reconstruction.}
    \label{fig:visual-inspection}
\end{figure}

\begin{table}[]
    \centering
    \begin{tabular}{|c|c|c|c|}
    \hline
              & Canonical Autoencoder & meta-autoencoder     \\\hline
      Training MSE  & 2.8594e-02  & \textbf{2.6148e-02} \\\hline
      Testing MSE  & 2.9126e-02  & \textbf{2.6416e-02} \\\hline
      Training Second Reconstruction MSE & 1.3154e-03 & \textbf{3.1645e-05}\\\hline
      Testing Second Reconstruction MSE & 1.3101e-03 &  \textbf{3.2502e-05}\\\hline
    \end{tabular}
    \caption{Reconstruction and second reconstruction mean squared error over the MNIST training and testing data for the canonical autoencoder~\cref{eq:canonical-autoencoder} and meta-autoencoder~\cref{eq:meta-autoencoder}.}
    \label{tab:testing-error}
\end{table}

The example in~\cref{eq:right-inverse-violation-example} of a worst-case scenario where a six is misidentified as a triangle, generally is not so obvious in real-world applications of autoencoders. If it did, the issues that we have identified in this paper would have been noticed sooner. The real danger is that these issues often manifest in very subtle ways.
We show an example of an obvious manifestation of these issues using the MNIST \cite{deng2012mnist} dataset of handwritten samples of digits. 

We focus on the canonical autoencoder~\cref{eq:canonical-autoencoder}, and the meta-autoencoder~\cref{eq:meta-autoencoder}.

For the encoder~\cref{eq:encoder} (initial condition function~\cref{eq:encoder-initial-condition} in the meta-autoencoder case) and decoder~\cref{eq:decoder} architectures we take a simple one hidden layer feed-forward architecture,
\begin{equation}
    \begin{gathered}
        \theta_0(X),\, \theta(X) = W_2^\theta \tanh(W_1^\theta X + b_1^\theta) + b_2^\theta,\\
        W_1^\theta \in \mathbb{R}^{h\times n},\,\, W_2^\theta \in \mathbb{R}^{r\times h},\,\,b_1^\theta \in \mathbb{R}^{h},\,\, b_2^\theta \in \mathbb{R}^{r},\\
        \phi(U) = W_2^\phi \tanh(W_1^\phi U + b_1^\phi) + b_2^\phi,\\
        W_1^\phi \in \mathbb{R}^{h\times r},\,\, W_2^\phi \in \mathbb{R}^{n\times h},\,\,b_1^\phi \in \mathbb{R}^{h},\,\, b_2^\phi \in \mathbb{R}^{n},\\
    \end{gathered}
\end{equation}
with the $\tanh$ non-linearity in order to keep the encoder and decoder continuous functions. The hidden layer dimension we set to $h=100$. We deliberately do not tune this hyperparameter because we want to show a degenerate case.

In dimensionality reduction there is a delicate balance between accuracy and efficiency. On one hand, the reduced dimension $r$ has to be large enough to accurately represent the key features of the data. On the other hand, $r$ has to be small enough in order for a computational advantage to exist. 
In many traditional machine learning applications, $r$ is treated as a hyperparameter, as the desired level of accuracy can be achieved with a small $r$. Many scientific applications are resource-bound, and thus there is typically a fixed budget for computational cost, thus $r$ has to be of a fixed size. For this reason, we assume herein the small value of $r=5$.

We similarly treat the number of epochs, batch size, and learning-rate scheduler as constants sufficient enough to achieve a global minimum, and not as hyperparameters that need tuning.

The two cost functions in \cref{eq:meta-autoencoder} that we take are the mean squared errors~\cref{eq:mse-assumption}.
For the optimal encoder~\cref{eq:optimal-encoder}, we perform a fixed number of $I=4$ Gauss-Newton iterations, and the linear system solve is carried out with the conjugate-gradient method. 
Both the the canonical and meta-autoencoders are trained with ADAM~\cite{kingma2014adam} optimization procedure. 

\begin{remark}
From empirical observations, more Gauss-Newton iterations led to challenges such as vanishing gradient. Fixing the number of iterations is equivalent to regularizing the optimization problem. It is of independent interest to stabilize the Gauss-Newton iterations in the optimal encoder, to be both accurate in the forward and adjoint modes.
\end{remark}

For testing we use the MNIST testing data set. We look at two measures of accuracy, the mean squared error~\cref{eq:mse-assumption} and the second reconstruction mean squared error,
\begin{equation}\label{eq:second-reconstruction-error}
    \mathrm{SR\,MSE}(X) = \frac{1}{n N}\norm{\phi\big(\theta\big(\phi\big(\theta(X)\big)\big)\big) - \phi\big(\theta(X)\big)}_F^2
\end{equation}
where $N$ stands for the number of data samples. We use the reconstruction of the right-inverse property to have a comparable scale and avoid violations of the assumptions in \cref{thm:unique-solution}.

\Cref{tab:testing-error} showcases the testing mean squared error over the whole testing set. Over the testing set, the meta-autoencoder shows about a $9\%$ decrease in reconstruction error, and a $98\%$ decrease in the second reconstruction error. The second of the two results strongly suggest that the practical implementation of the meta-autoencoder reduces violations of the right-inverse property. 

The visual inspection in \cref{fig:visual-inspection} of the  first digit `6' from the MNIST testing data-set showcases different reconstructions with respect to each autoencoder tested. The canonical autoencoder recognizes this digit as something between a zero, a six and an eight, while the meta-autoencoder correctly encoded and decoded a six.
We also showcase the 100th reconstruction of each, similar in the way that~\cref{eq:second-reconstruction-error} is computed. It is clear that degenerate behavior occurs for the canonical autoencoder, while the meta-autoencoder is completely stable.

The results strongly suggest that the meta-autoencoder~\cref{eq:meta-autoencoder} formulation is significantly different than the canonical autoencoder~\cref{eq:canonical-autoencoder} formulation, and that it produces more accurate encoder-decoder pairs.

\section{Conclusions}
\label{sec:conclusions}

In this paper we show that the canonical formulation of the autoencoder problem~\cref{eq:canonical-autoencoder} suffers from issues that can limit its ability to solve applications such as reduced order modeling.
We provide a formulation~\cref{eq:meta-autoencoder} based on meta-learning that accounts for the dimensionality reduction task~\cref{eq:dimensionality-reduction-task},
and provide proofs that it does not suffer from the same issues.
A numerical illustration on the MNIST data set, shows significant reductions in error and visually showcases the stability of the meta-autoencoder approach.

The meta-autoencoder framework leads to well-posed formulations of the encoder and decoder. Unlike weak heuristics such as regularization and dropout, the dimensionality reduction task is a deterministic constraint and provides a provably optimal solution to the autoencoder problem. 

We show that under certain simplifying assumptions the meta-autoencoder formulation provides a unique reduced dimension representation of any given data point. 
We provide a simple practical implementation to solve  problem for a least-squares cost function, where the the full representation  of the encoder is given by a neural network coupled with an optimization algorithm that minimizes an internal cost function.

Future research will focus on the use of the meta-autoencoders for performing model order reduction, and other data-driven scientific tasks. Practical approaches such as posing the bi-level optimization in~\cref{eq:meta-autoencoder} in a weaker form~\cite{bard2013practical, cioaca2014optimization} are also of great interest.

Additionally, the authors conjecture that almost all results that involve dimensionality reduction with autoencoders have to be re-evaluated with the ideas presented herein into account.

\bibliographystyle{siamplain}
\bibliography{biblio}

\end{document}